\documentclass[]{fairmeta}

\usepackage[utf8]{inputenc}
\usepackage[T1]{fontenc} 
\usepackage{hyperref}   
\usepackage{url}        
\usepackage{booktabs}       
\usepackage{amsfonts}       
\usepackage{nicefrac}      
\usepackage{microtype}     
\usepackage{xcolor}        
\usepackage{tikz}
\usepackage{algorithm}
\usepackage[noend]{algpseudocode}
\usepackage{listings}
\usepackage{caption}
\usepackage{amsthm}
\usepackage{amsmath}
\usepackage[inline]{enumitem}
\usepackage{xspace} 
\usepackage{cleveref}
\usepackage{subscript}
\usepackage{fvextra}
\usepackage{framed}
\usepackage{color}
\usepackage{graphicx} 
\usepackage{wrapfig} 
\usepackage{subcaption}
\usepackage{mathtools}
\usepackage{adjustbox}
\usepackage{siunitx}
\usepackage{twemojis}
\usepackage{tabularx}
\usepackage{fontawesome5}
\usepackage{amssymb}
\newtheorem{theorem}{Theorem}
\usepackage[labelsep=colon]{caption}
\definecolor{shadecolor}{rgb}{0.95, 0.95, 0.95}

\definecolor{lightgray}{gray}{0.95}
\definecolor{darkblue}{rgb}{0.1,0.2,0.6}
\definecolor{codegray}{rgb}{0.4,0.4,0.4}

\usetikzlibrary{shapes.geometric, arrows.meta, positioning, fit, calc, backgrounds, shadows}

\definecolor{primaryBlue}{RGB}{0, 105, 148}
\definecolor{workerGreen}{RGB}{46, 139, 87}
\definecolor{evalOrange}{RGB}{230, 126, 34}
\definecolor{outputGold}{RGB}{241, 196, 15}
\definecolor{darkGray}{RGB}{50, 50, 50}



\lstdefinestyle{pddlstyle}{
  numbers=left,
  numbersep=-6pt,
  backgroundcolor=\color{lightgray},
  basicstyle=\ttfamily\scriptsize,
  keywordstyle=\color{darkblue}\bfseries,
  commentstyle=\color{codegray}\itshape,
  showstringspaces=false,
  frame=single,
  breaklines=true,
  language=PDDL
}

\lstdefinestyle{nl}{
  numbers=left,
  numbersep=-6pt,
  backgroundcolor=\color{lightgray},
  basicstyle=\ttfamily\scriptsize,
  keywordstyle=\color{darkblue}\bfseries,
  commentstyle=\color{codegray}\itshape,
  showstringspaces=false,
  frame=single,
  breaklines=true,
  breakindent=0pt, 
  mathescape=true,
  lineskip=-1pt, 
}

\lstdefinestyle{pythonstyle}{
  numbers=none,
  numbersep=-6pt,
  backgroundcolor=\color{lightgray},
  basicstyle=\ttfamily\scriptsize,
  keywordstyle=\color{darkblue}\bfseries,
  commentstyle=\color{codegray}\itshape,
  stringstyle=\color{codegray},
  showstringspaces=false,
  frame=single,
  breaklines=true,
  breakindent=0pt,
  language=Python,
}

\usetikzlibrary{shapes.misc}

\crefname{figure}{Fig.}{Figures}

\newcommand{\airadojo}{{AIRA-dojo}\xspace}

\newcommand{\atlas}{{\textsc{AIRA\textsubscript{2}}}\xspace}
\newcommand{\atlasplus}{{\textsc{AIRA$^{\dagger}_{2}$}}\xspace}
\newcommand{\boldatlas}{{\bfseries\scshape AIRA\textsubscript{2}}\xspace}
\newcommand{\boldatlasplus}{{\bfseries\scshape AIRA$^{\dagger}_{2}$}\xspace}

\newcommand{\percentile}{{Percentile Rank}\xspace}

\title{\atlas: Overcoming Bottlenecks in AI Research Agents}

\author[1,2,*]{Karen Hambardzumyan}
\author[1, *]{Nicolas Baldwin}
\author[1,2,*]{Edan Toledo}
\author[1, *]{Rishi Hazra}
\author[1, *]{Michael Kuchnik}

\author[1]{Bassel Al Omari}
\author[1, 3]{Thomas Simon Foster}
\author[1]{Anton Protopopov}
\author[1]{Jean-Christophe Gagnon-Audet}
\author[1]{Ishita Mediratta}
\author[1]{Kelvin Niu}
\author[1]{Michael Shvartsman}

\author[1, 3]{Alisia Lupidi}
\author[1]{Alexis Audran-Reiss}
\author[1]{Parth Pathak}
\author[1]{Tatiana Shavrina}
\author[1]{Despoina Magka}

\author[1]{Hela Momand}
\author[1]{Derek Dunfield}
\author[1]{Nicola Cancedda}
\author[2]{Pontus Stenetorp}
\author[1]{Carole‑Jean Wu}
\author[1, 3]{Jakob Nicolaus Foerster}
\author[1]{Yoram Bachrach}
\author[1, *]{Martin Josifoski}

\affiliation[1]{FAIR at Meta}
\affiliation[2]{University College London}
\affiliation[3]{University of Oxford}

\contribution[*]{Equal contribution (author order determined by Mario Kart placement)}

\abstract{
Existing research has identified three structural \textit{performance bottlenecks} in AI research agents: (1)~synchronous single-GPU execution constrains \textit{sample throughput}, limiting the benefit of search; (2)~a \textit{generalization gap} where validation-based selection causes overfitting and performance to degrade over extended search horizons; and (3)~the \textit{limited capability} of fixed, single-turn LLM operators imposes a ceiling on search performance. We introduce \atlas, which addresses these bottlenecks through three architectural choices: an asynchronous multi-GPU worker pool that increases experiment throughput linearly; a Hidden Consistent Evaluation protocol that delivers a reliable evaluation signal; and ReAct agents that dynamically scope their actions and debug interactively. On MLE-bench-30, \atlasplus achieves a mean \percentile of \textbf{81.5\%} at 24 hours and \textbf{83.1\%} at 72 hours, outperforming the strongest baseline, which achieves 72.7\%. On AIRS-Bench, \atlas exceeds human state-of-the-art on 6 out of 20 diverse research tasks. Ablations confirm that each architectural component is necessary, that performance follows a predictable scaling law that transfers across LLM backbones, and that the ``overfitting'' reported in prior work was driven by evaluation noise rather than true data memorization.
}
\correspondence{Martin Josifoski at \email{martinjosifoski@meta.com}}

\begin{document}

\maketitle

\section{Introduction}
\label{sec:introduction}

\begin{figure} [!b]
    \centering
    \includegraphics[width=1\linewidth]{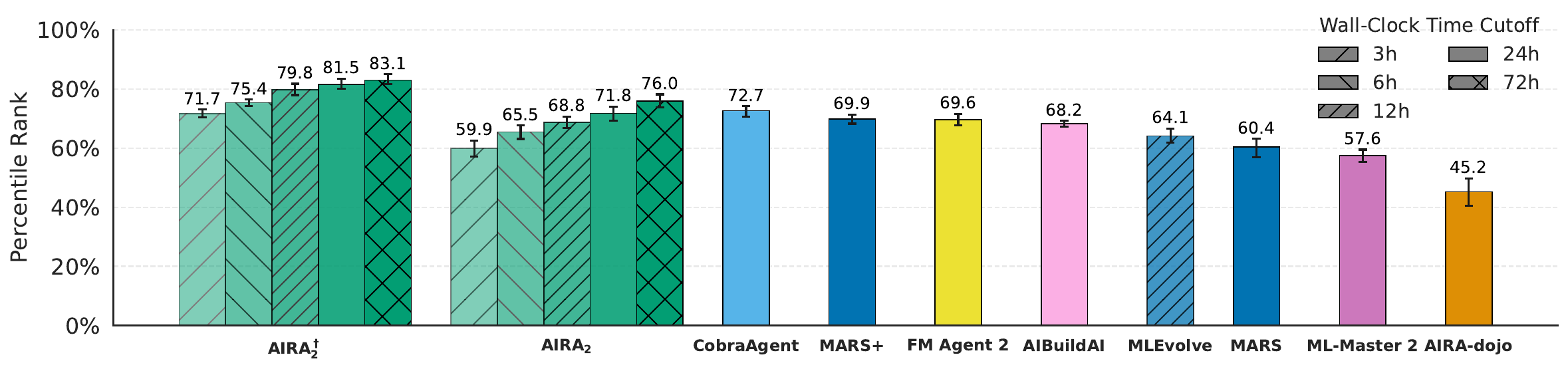}
    \caption{\boldatlas \textbf{performance on MLE-bench-30.} We evaluate \atlas against top-performing agents from the MLE-bench leaderboard across different compute budgets. Utilizing 8 GPU workers for all configurations, \atlas surpasses the strongest baselines at a 24-GPU-hour budget. Performance improves consistently with additional compute, demonstrating the effectiveness of our architectural design. \atlasplus uses Gemini 3.1, while \atlas uses Gemini 3.0.}
    \label{fig:summary}
\end{figure}

The rapid advancement of Large Language Model (LLM) capabilities has enabled the development of autonomous agents capable of executing complex, multi-step workflows~\citep{josifoski2023flows, wu2024autogen}. While these agents have achieved remarkable success in software engineering~\citep{wang2025openhands, anthropic2025claudecode, openai2025codex} and mathematics~\citep{novikov2025alphaevolve, hubert2025olympiad}---benefiting from execution environments with rapid feedback loops of verifiable signal---automating the scientific process presents a distinct class of challenges. Scientific research requires agents to learn through controlled experimentation within a vast, partially observable, open-ended design space~\citep{langley1987scientific}. Unlike coding, research involves navigating noisy evaluation signals and designing proxy tasks to estimate performance~\citep{chan2025mlebench}. 
Furthermore, given the high computational cost and latency of valid experiments, optimizing the research process requires looking beyond individual reasoning capabilities: research agents must be designed to manage long-horizon exploration and parallelize compute-intensive evaluations efficiently.

The best performing open-source agent on MLE-bench\footnote{A challenging benchmark where agents compete in Kaggle competitions.}~\citep{chan2025mlebench}, \airadojo~\citep{toledo2025ai}, frames research agents as a search over candidate solutions, that can be decomposed into search policies and operators.
Through systematic ablations, the authors formalized three structural bottlenecks preventing further scaling:  \textbf{(1)~Compute Throughput}---synchronous single-GPU execution constrains sample generation and limits exploration; \textbf{(2)~Generalization Gap}---validation-test divergence misleads the search signal, causing overfitting over extended research horizons; and \textbf{(3)~Operator Capability}---fixed, single-turn operators limit the agent to shallow, single-turn reasoning that sophisticated search cannot overcome. As noted by \citet{chan2025mlebench,jiang2025aide,zhu2026toward}, these performance plateaus emerge even within a relatively short 24-hour regime, suggesting that addressing these fundamental bottlenecks is a prerequisite for effectively utilizing additional compute.

Guided by these insights, we introduce \atlas, a research agent \emph{designed to overcome these structural bottlenecks}. \atlas resolves the three bottlenecks via specific architectural choices: (1) asynchronous multi-agent exploration, (2) a Hidden Consistent Evaluation protocol, and (3) dynamically scoped ReAct agents. 

First, the standard reliance on synchronous single-GPU execution severely bottlenecks throughput: the search process stalls whenever an expensive experiment runs, starving exploration of samples and limiting the learnings from experiment results within the available wall-clock budget. \emph{\atlas introduces an asynchronous multi-GPU worker pool} and containerization system that decouples decision-making from execution, enabling massively parallel experimentation and increasing experiment throughput linearly with available GPU resources. Concretely, 8 GPUs yield approximately 8$\times$ the experimental throughput, compressing what would otherwise require days of sequential exploration into hours.

Second, the generalization gap undermines long-horizon search: agents optimize validation metrics at the expense of held-out test performance, causing trajectories to overfit. 
\atlas addresses this overfitting with a \emph{Hidden Consistent Evaluation (HCE) protocol}: data splits are standardized once and reused across all candidates, evaluation labels are hidden from the agent, and the search signal is decoupled from the final selection signal. 
HCE leads to improved performance at the 24-hour mark, and enables continued performance gains with additional compute.

Third, operators that depend on fixed prompts and single-turn actions impose a performance ceiling: for example, a static ``debug'' prompt cannot iteratively diagnose complex errors, and no amount of search sophistication can compensate. \emph{\atlas replaces all operators with ReAct agents}~\citep{yao2022react} that autonomously scope their actions---performing exploratory data analysis, running small development experiments, inspecting logs, and more, thus alleviating the limitations of operator design. ReAct agents expand the range of tasks the system can tackle, improve performance, and speed up the discovery of strong solutions, making search more efficient.
    
Together, these design decisions enable \atlas to achieve a mean \percentile of \textbf{81.5\%} at 24 hours and \textbf{83.1\%} at 72 hours on MLE-bench-30~\citep{singh2025openai}, outperforming the strongest baseline, which achieves 72.7\%.
Beyond the Kaggle-style competitions of MLE-bench, we evaluate \atlas in a case study on AIRS-Bench~\citep{lupidi2026airs}, where it exceeds the published state-of-the-art on 6 out of 20 diverse research tasks.
Ablations confirm that each architectural component contributes to performance, and that performance follows a predictable scaling law that transfers across LLM backbones. 
\atlasplus denotes the use of Gemini 3.1, while all other \atlas variants in the paper use Gemini 3.0. 
The consistent gains when using a stronger model backbone confirm that the architecture scales with model capability.

\section{Background}
\label{sec:background}
The domain of automated machine learning has shifted rapidly from simple heuristics~\citep{bergstra2012random,elsken2017simple,li2018hyperband} to autonomous agents capable of long-horizon research~\citep{aiscientistv2}. Recent state-of-the-art systems, such as \textbf{MARS}~\citep{chen2026mars}, \textbf{MLEvolve}~\citep{du2025automlgen}, \textbf{PiEvolve}~\citep{PiEvolve}, \textbf{FM-Agent 2.0}~\citep{li2025fmagent}, and \textbf{ML-Master 2.0}~\citep{liu2025mlmasteraiforaiintegrationexploration}, leverage inference-time scaling and evolutionary search to solve Kaggle competitions. These systems typically model research as a search process over a graph of candidate solutions. However, despite these advances, performance is currently hindered by three structural bottlenecks identified in prior formalizations of the agentic research process~\citep{toledo2025ai}.
\subsection{The Compute \& Throughput Bottleneck}
\label{sec:bg_compute}
Effective search requires high sample throughput---the number of candidate solutions generated and evaluated per unit time---to explore the vast combinatorial space of ML solutions. However, the standard agent architecture often relies on synchronous execution, where the reasoning loop blocks while waiting for experimental feedback.

In the context of MLE-bench, where model training and evaluation can take hours, this serialization is catastrophic. A synchronous agent is effectively ``sample-bound,'' and, on compute-heavy tasks, limited to evaluating only $\approx$1--20 candidates per day. This throughput is insufficient to support the deep exploration required by evolutionary or tree-search methods, rendering them theoretically powerful but practically intractable without parallelization.

\subsection{The Generalization Gap (Overfitting)}
\label{sec:bg_overfitting}
The utility of any search process is contingent on the fidelity of its reward signal. Crucially, this signal need not be a single quantitative metric---in many research settings, such a reduction is neither possible nor desirable. What matters is that the signal, whether numeric, qualitative, or composite, faithfully represents the underlying phenomenon under study. In the competition setting of MLE-bench, this principle manifests concretely as the \textit{generalization gap}---the divergence between the validation metric (used to guide the search) and the held-out test metric (the true objective).

Oracle experiments in prior work revealed that selecting the final submission based on test scores rather than validation scores improves medal rates by \textbf{9--13\%} (absolute)~\citep{toledo2025ai}. This gap stems from two sources: \textbf{(1)} agents ``gaming'' self-reported metrics to satisfy the search objective, and \textbf{(2)} the reuse of the validation set for both hill-climbing (optimization) and final selection, which inevitably leads to overfitting as the search horizon extends. Beyond algorithmic overfitting, the search signal is frequently corrupted by execution noise. Implementation bugs can spuriously inflate validation metrics (see Appendix~\ref{sec:appendix_eval_noise} for a concrete example), while brittle output parsing often leads to missing or erroneous score extraction. Furthermore, stochasticity in data splitting introduces significant variance, allowing inferior solutions to survive selection purely due to favourable random seeds. Closing the generalization gap is therefore a prerequisite for reliable automated research: without a trustworthy reward signal, even a perfect search algorithm will converge on solutions that exploit evaluation artifacts rather than capture genuine predictive structure.

\subsection{The Static Operator Limitation}
\label{sec:bg_operators}

Research agents can be decomposed into a \textit{search policy} (which selects which node to expand) and a set of \textit{atomic operators} (which transform a node into new candidates). In practice, these operators are often hand-designed for anticipated sub-tasks: one prompt for exploratory data analysis, another for feature engineering, another for hyperparameter tuning, and so on. This design is fundamentally brittle: each new domain demands additional human-scoped operators, and the agent can only perform actions its designers anticipated. As tasks grow in complexity---requiring multi-file codebases, shared artifacts, and iterative debugging---a fixed operator pipeline cannot adapt to the unpredictable dependencies that arise.

Empirically, \citet{toledo2025ai} demonstrated that when operators are static (e.g., fixed \textsc{Draft}/\textsc{Improve} prompts), more advanced search algorithms yield no statistically significant improvement over greedy baselines---though this finding is confounded by evaluation noise. Beyond search effectiveness, fixed operators cannot dynamically allocate compute proportional to the difficulty of a sub-problem, limiting the agent to solutions reachable by shallow, single-turn reasoning.

\section{\atlas Research Agent Design}
\label{sec:system}

In this section, we describe the design of \atlas, a system that addresses the three bottlenecks identified in Section~\ref{sec:background}. Its architecture comprises two tiers: a \textbf{Global Orchestrator} that coordinates search over a population of candidates, and an \textbf{Asynchronous Worker Pool of ReAct agents} mutating solutions in isolated containers (Figure~\ref{fig:full_architecture}).

\begin{figure}[t]
    \vspace{-0.5cm}
    \hspace{0cm}
    \includegraphics[width=1.0\textwidth]{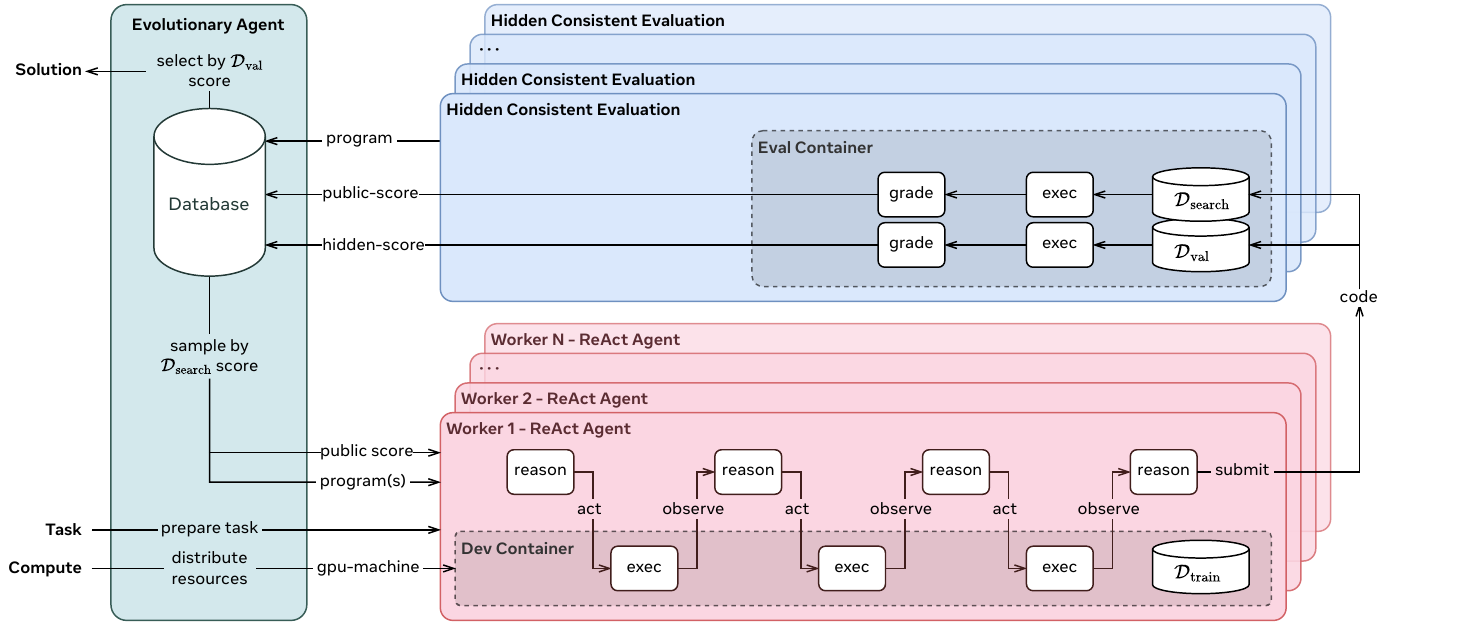}
    \caption{\textbf{\atlas architecture.} ~The Evolutionary Agent orchestrates the search by maintaining a population of candidate solutions and dispatching mutation tasks to the $N$ workers as they become available, without any synchronization barriers. Each worker \textit{asynchronously} executes a ReAct agent which iteratively reasons, executes code, and observes outputs until a candidate solution is ready. Candidate solutions are evaluated in a separate container, and agents observe only the resulting score. Evaluation is partitioned: $\mathcal{D}_{\text{search}}$ guides optimization while $\mathcal{D}_{\text{val}}$ determines final selection. In our main experiments, we use $N$ = 8 workers.}
    \label{fig:full_architecture}
\end{figure}

\subsection{Evolutionary Search}
\label{sec:orchestrator}

The orchestrator maintains a population $\mathcal{P}$ of candidate solutions, their fitness scores, and all other associated metadata. Search proceeds via asynchronous, steady-state evolution~\citep{syswerda1991study}: whenever a worker becomes available, the orchestrator samples a parent (or two) and dispatches a mutation/crossover task.

\paragraph{\textbf{Selection.}} Parents are sampled via temperature-scaled rank-based selection. Given a population $\mathcal{P}$ of size $N$ sorted by fitness, we assign rank $r_i \in \{1, \dots, N\}$ to each individual (where $1$ is the best). The probability of selecting individual $i$ is:
\begin{equation}
    p(i) = \frac{(N - r_i + 1)^{1/T}}{\sum_{j=1}^{N} (N - r_j + 1)^{1/T}},
\end{equation}
where $T$ controls the exploration-exploitation tradeoff. As $T \to 0$, the policy becomes greedy (selecting the rank $1$ individual), while higher $T$ increases diversity. We opted for rank-based rather than fitness-proportionate selection because ranks are invariant to the magnitude and scale of fitness scores, which vary widely across tasks.

\paragraph{\textbf{Mutation/Crossover.}} The orchestrator randomly selects between \textit{mutation} (refining a single parent) and \textit{crossover} (combining two parents) with probability $c$. Both operations are executed by ReAct agents (Section~\ref{sec:react_agents}), which receive the parent solution(s) (and other metadata) as context.

\subsection{Scaling Compute: Asynchronous Multi-GPU Execution}
\label{sec:async_execution}

The asynchronous nature of steady-state evolution inherently facilitates parallel orchestration, allowing us to distribute the search workload across concurrent workers without synchronization barriers. Unlike generational evolution, which must wait for all workers to complete before proceeding, steady-state evolution allows the orchestrator to sample parents and dispatch mutations as soon as any individual worker becomes available. This is particularly beneficial when mutations are longer, more involved processes---such as multi-step ReAct trajectories that may vary widely in duration---since fast-completing workers are never left idle waiting for slower ones. To bypass the complexity of dynamic resource scheduling, we adopt a static allocation scheme where each worker is assigned a dedicated GPU. This hardware configuration is mirrored across both development and evaluation environments, ensuring that every execution begins from an identical, clean slate.

\paragraph{\textbf{Remote Execution \& Containerization.}} We employ a remote tool execution system to decouple agent logic from the execution environment. Workers execute code within ephemeral \texttt{Apptainer}~\citep{gregory_m_kurtzer_2021_4667718} containers using the \texttt{Superimage} environment~\citep{toledo2025ai}, which comes pre-installed with a comprehensive suite of Python, machine learning, and data science packages. Crucially, containers run in \texttt{fakeroot} mode, granting the agent perceived root privileges to install system-level dependencies via \texttt{apt} or \texttt{pip}. This setup ensures a flexible, reproducible, and robust action space where crashed containers do not affect the orchestrator or other workers.

\paragraph{\textbf{Stateful Interaction.}} Unlike previous systems such as AIDE~\citep{jiang2025aide} and \airadojo, our Bash command and Jupyter kernel execution tools are stateful. This allows agents to maintain context across multiple turns, enabling a more interactive and iterative debugging process. Tool outputs also include execution duration, allowing agents to monitor their own efficiency.

\paragraph{\textbf{Resource Allocation.}} We enforce a strict 1:1 worker-to-GPU mapping using NVIDIA H200 GPUs (141GB VRAM). Each worker is allocated 12 logical CPU cores and a dedicated 120GB of system RAM, providing sufficient compute for training large models, running hyperparameter sweeps, or building deep ensembles without resource contention.

\paragraph{\textbf{Data Management \& Limits.}} The program database resides in-memory for fast access, with large artifacts automatically offloaded to hard disk. Subagents communicate exclusively through this central database, contributing ideas and code asynchronously. To optimize resource utilization, evaluation is performed in the foreground without a separate job queue, running in an identical container with a dedicated GPU. The search process continues until a global hard limit is reached, with individual code executions capped at a 9-hour hard time limit to prevent stalled processes.

\subsection{Closing the Generalization Gap: Hidden Consistent Evaluation}
\label{sec:hidden_eval}

To close the generalization gap, \atlas decouples the signal used for search from the signal used for final selection, and externalizes all evaluation to prevent metric gaming.
Beyond serving as a practical safeguard, this protocol is designed as an experimental tool: by controlling the evaluation procedure, we can systematically test whether agents truly overfit to their data or whether previously reported degradation stems from evaluation noise (Section~\ref{sec:ablation_evaluation}).

\textbf{Data partitioning.} Before search begins, available labels are split into three disjoint sets:
\begin{itemize}[leftmargin=1.5em, itemsep=0.3ex]
    \item $\mathcal{D}_{\text{train}}$: visible to the agent for model training.
    \item $\mathcal{D}_{\text{search}}$: used by the orchestrator for fitness computation; \textit{labels hidden from agents}.
    \item $\mathcal{D}_{\text{val}}$: used \textit{only} for final selection after search terminates; \textit{hidden from both agents and the search process.}
\end{itemize}
These splits are created once via random sampling of the available labeled training data (80\%/10\%/10\%) and reused identically across all seeds. Notably, baselines are evaluated using their published results on the test set with access to all training data, while \atlas reserves 20\% of training data for search/selection, making the comparison conservative. Upon submission, the solution submits predictions for $\mathcal{D_{\text{test}}}$. Thus, all splits are consistent across programs and seeds.

\textbf{Externalized evaluation.} Agents never self-report metrics. When a worker returns a solution, the orchestrator evaluates it on $\mathcal{D}_{\text{search}}$ in a separate container, mirroring the dev environment. Agents observe only the resulting score, not the labels, preventing feedback loops that enable metric hacking. The evaluation is performed on all $\mathcal{D}_{\text{search}}$, $\mathcal{D}_{\text{val}}$ and $\mathcal{D}_{\text{test}}$ splits, but $\mathcal{D}_{\text{test}}$ is neither forwarded to agents nor to orchestrator.

\textbf{Decoupled selection.} Because $\mathcal{D}_{\text{val}}$ is never used during search, the final selection is insulated from hill-climbing dynamics. This separation---combined with the hidden consistent evaluation procedures across all candidates---is designed to reduce the validation--test gap observed in prior systems.

\subsection{Beyond Static Operators: ReAct Agents}
\label{sec:react_agents}

To overcome the operator limitations, \atlas replaces static operators with ReAct agents~\citep{yao2022react} that execute multi-step reasoning trajectories. Each mutation produces a trajectory:
\begin{equation}
    \tau = \bigl(\text{Reason}_1, \text{Act}_1, \text{Obs}_1, \ldots, \text{Reason}_{K-1}, \text{Act}_{K-1}, \text{Obs}_{K-1}, \text{Reason}_{K},\text{Act}_K\bigr),
\end{equation}
where \textit{Actions} ($\text{Act}_t$) are Python or Bash commands executed in a sandboxed environment, and \textit{Observations} ($\text{Obs}_t$) consist of execution outputs. Within the ReAct trajectory, no additional guidance and instructions are provided. The final $\text{Act}_K$ is the ``submit'' tool, which sends the solution back to the orchestrator; the orchestrator then performs the evaluation and finally adds the program and the artifacts to the database.

This formulation provides two capabilities that fixed operators lack:

\textbf{Dynamic scoping.} The agent decides at runtime what actions are necessary. For tasks requiring exploratory data analysis, it inspects distributions and observes correlations before modelling; conversely, for tasks focused on model refinement, it prioritizes hyperparameter tuning and architecture evaluation. It has the ability to do local experimentation and simulation before committing to specific ideas. This eliminates brittle ``scope engineering'' in fixed operator prompts.

\textbf{Interactive debugging.} When code raises an exception, the agent observes the traceback within the same trajectory, hypothesizes a fix, and re-executes---resolving errors without forfeiting the mutation attempt. Static \textsc{Debug} operators, by contrast, lack iterative access to the execution environment and require handcrafted re-prompting and consecutive operator calls.

\section{Experiments and Results}
\label{sec:main_results}

\subsection{Experimental Setup}
\label{sec:setup}

\paragraph{\textbf{Tasks.}} 
We evaluate \atlas on MLE-bench-30, a curated subset of 30 Kaggle competitions from MLE-bench~\citep{chan2025mlebench} used in the GPT-5 system card~\citep{singh2025openai}. We opted for this subset to facilitate a lightweight yet representative evaluation; unlike MLE-bench Lite, which focuses primarily on low-complexity tasks, MLE-bench-30 spans a broader difficulty spectrum stratified into 5 low, 20 medium, and 5 high complexity tasks. Following standard protocol, we report the \textbf{medal rate}---the fraction of runs achieving at least a bronze medal on the Kaggle leaderboard. However, our primary analytical metric is \textbf{\percentile}, representing the agent's simulated leaderboard position. Calculated as $P = \frac{N - R}{N - 1} \times 100$ (where $N$ is total entries and $R$ is the ordinal rank, with $1$ being best), \percentile offers three advantages over medal rate: (1) it is continuous rather than discrete, enabling finer-grained comparisons; (2) it captures the full distribution of performance rather than a binary medal outcome; and (3) it avoids threshold effects near medal boundaries that amplify noise in aggregate statistics~\citep{audran2025does}. Importantly, gains at higher percentiles are progressively harder to achieve, as each increment requires outperforming increasingly skilled human competitors.

\textbf{System configuration.} While \atlas can scale to large GPU counts, for these experiments, we use 8$\times$ \texttt{NVIDIA H200 GPUs} (\texttt{141GB VRAM} each) with 1:1 worker-to-GPU mapping. 
Workers execute in \texttt{Apptainer} containers with \texttt{CUDA}, \texttt{PyTorch}, and standard data science libraries. 
ReAct agents are powered by \texttt{Gemini 3.0 Pro Preview}~\citep{google2025gemini3} in all \atlas variants, with the exception of \atlasplus, which shares the same architecture but uses \texttt{Gemini 3.1 Pro Preview} as the LLM backbone.
The orchestrator runs steady-state evolution with temperature-scaled rank selection ($T=0.2$) and crossover probability $p_c=15\%$.

\textbf{Protocol.} Each task runs for 72 hours wall-clock time. We run 3 independent seeds per task and report mean $\pm$ SE. Task data is partitioned as 80\% $\mathcal{D}_{\text{train}}$ / 10\% $\mathcal{D}_{\text{search}}$ / 10\% $\mathcal{D}_{\text{val}}$. Final submissions are evaluated directly on the held-out test set $\mathcal{D_{\text{test}}}$ \textbf{without} retraining on the full available training data $\cup\{\mathcal{D_{\text{train}}}, \mathcal{D_{\text{search}}}, \mathcal{D_{\text{val}}}\}$.

\begin{table}[t]
\centering
\small
\renewcommand{\arraystretch}{1.75}
\begin{tabularx}{\textwidth}{p{4.1cm} ccc ccc ccc ccc}
\toprule
& \multicolumn{3}{c}{\textbf{\% \percentile}} & \multicolumn{3}{c}{\textbf{\% Bronze+}} & \multicolumn{3}{c}{\textbf{\% Silver+}} & \multicolumn{3}{c}{\textbf{\% Gold}} \\
\cmidrule(lr){2-4} \cmidrule(lr){5-7} \cmidrule(lr){8-10} \cmidrule(lr){11-13}
Method & 3h & 24h & 72h & 3h & 24h & 72h & 3h & 24h & 72h & 3h & 24h & 72h \\
\midrule
\boldatlasplus & \textbf{\shortstack{71.7 \\ \scriptsize $\pm$3.5}} & \textbf{\shortstack{81.5 \\ \scriptsize $\pm$3.2}} & \textbf{\shortstack{83.1 \\ \scriptsize $\pm$3.2}} & \textbf{\shortstack{54.4 \\ \scriptsize $\pm$5.3}} & \textbf{\shortstack{72.2 \\ \scriptsize $\pm$4.7}} & \textbf{\shortstack{76.7 \\ \scriptsize $\pm$4.5}} & \textbf{\shortstack{47.8 \\ \scriptsize $\pm$5.3}} & \textbf{\shortstack{65.6 \\ \scriptsize $\pm$5.0}} & \textbf{\shortstack{73.3 \\ \scriptsize $\pm$4.7}} & \textbf{\shortstack{28.9 \\ \scriptsize $\pm$4.8}} & \textbf{\shortstack{41.1 \\ \scriptsize $\pm$5.2}} & \textbf{\shortstack{52.2 \\ \scriptsize $\pm$5.3}} \\
\atlas & \shortstack{59.9 \\ \scriptsize $\pm$3.6} & \shortstack{71.8 \\ \scriptsize $\pm$3.5} & \shortstack{76.0 \\ \scriptsize $\pm$3.4} & \shortstack{38.9 \\ \scriptsize $\pm$5.2} & \shortstack{57.8 \\ \scriptsize $\pm$5.2} & \shortstack{61.1 \\ \scriptsize $\pm$5.2} & \shortstack{33.3 \\ \scriptsize $\pm$5.0} & \shortstack{50.0 \\ \scriptsize $\pm$5.3} & \shortstack{58.9 \\ \scriptsize $\pm$5.2} & \shortstack{20.0 \\ \scriptsize $\pm$4.2} & \shortstack{32.2 \\ \scriptsize $\pm$5.0} & \shortstack{36.7 \\ \scriptsize $\pm$5.1} \\
\atlas (4 GPU) & \shortstack{56.9 \\ \scriptsize $\pm$3.6} & \shortstack{71.2 \\ \scriptsize $\pm$3.4} & \shortstack{76.5 \\ \scriptsize $\pm$3.4} & \shortstack{37.8 \\ \scriptsize $\pm$5.1} & \shortstack{55.6 \\ \scriptsize $\pm$5.3} & \shortstack{62.2 \\ \scriptsize $\pm$5.1} & \shortstack{30.0 \\ \scriptsize $\pm$4.9} & \shortstack{47.8 \\ \scriptsize $\pm$5.3} & \shortstack{55.6 \\ \scriptsize $\pm$5.3} & \shortstack{11.1 \\ \scriptsize $\pm$3.3} & \shortstack{30.0 \\ \scriptsize $\pm$4.9} & \shortstack{40.0 \\ \scriptsize $\pm$5.2} \\
\atlas (1 GPU) & \shortstack{41.3 \\ \scriptsize $\pm$3.9} & \shortstack{56.8 \\ \scriptsize $\pm$3.8} & \shortstack{63.5 \\ \scriptsize $\pm$3.8} & \shortstack{22.2 \\ \scriptsize $\pm$4.4} & \shortstack{41.1 \\ \scriptsize $\pm$5.2} & \shortstack{51.1 \\ \scriptsize $\pm$5.3} & \shortstack{17.8 \\ \scriptsize $\pm$4.1} & \shortstack{32.2 \\ \scriptsize $\pm$5.0} & \shortstack{41.1 \\ \scriptsize $\pm$5.2} & \shortstack{10.0 \\ \scriptsize $\pm$3.2} & \shortstack{20.0 \\ \scriptsize $\pm$4.2} & \shortstack{24.4 \\ \scriptsize $\pm$4.6} \\
\atlas (No Subagents) & \shortstack{54.4 \\ \scriptsize $\pm$3.7} & \shortstack{68.6 \\ \scriptsize $\pm$3.6} & \shortstack{73.7 \\ \scriptsize $\pm$3.6} & \shortstack{33.3 \\ \scriptsize $\pm$5.0} & \shortstack{52.2 \\ \scriptsize $\pm$5.3} & \shortstack{60.0 \\ \scriptsize $\pm$5.2} & \shortstack{30.0 \\ \scriptsize $\pm$4.9} & \shortstack{47.8 \\ \scriptsize $\pm$5.3} & \shortstack{53.3 \\ \scriptsize $\pm$5.3} & \shortstack{13.3 \\ \scriptsize $\pm$3.6} & \shortstack{32.2 \\ \scriptsize $\pm$5.0} & \shortstack{37.8 \\ \scriptsize $\pm$5.1} \\
\atlas (No HCE) & \shortstack{43.4 \\ \scriptsize $\pm$3.8} & \shortstack{56.8 \\ \scriptsize $\pm$4.2} & \shortstack{56.3 \\ \scriptsize $\pm$4.3} & \shortstack{29.7 \\ \scriptsize $\pm$4.6} & \shortstack{46.5 \\ \scriptsize $\pm$5.0} & \shortstack{47.5 \\ \scriptsize $\pm$5.0} & \shortstack{25.7 \\ \scriptsize $\pm$4.4} & \shortstack{45.5 \\ \scriptsize $\pm$5.0} & \shortstack{46.5 \\ \scriptsize $\pm$5.0} & \shortstack{12.9 \\ \scriptsize $\pm$3.3} & \shortstack{30.7 \\ \scriptsize $\pm$4.6} & \shortstack{32.7 \\ \scriptsize $\pm$4.7} \\
\atlas (No Evo.) & \shortstack{54.7 \\ \scriptsize $\pm$3.6} & \shortstack{64.0 \\ \scriptsize $\pm$3.5} & \shortstack{65.2 \\ \scriptsize $\pm$3.5} & \shortstack{35.3 \\ \scriptsize $\pm$5.2} & \shortstack{45.9 \\ \scriptsize $\pm$5.4} & \shortstack{47.1 \\ \scriptsize $\pm$5.4} & \shortstack{31.8 \\ \scriptsize $\pm$5.1} & \shortstack{40.0 \\ \scriptsize $\pm$5.3} & \shortstack{43.5 \\ \scriptsize $\pm$5.4} & \shortstack{16.5 \\ \scriptsize $\pm$4.0} & \shortstack{23.5 \\ \scriptsize $\pm$4.6} & \shortstack{24.7 \\ \scriptsize $\pm$4.7} \\
\midrule
\multicolumn{2}{l}{CobraAgent~\citep{dalpha2026cobraagent}}  & \shortstack{72.7 \\ \scriptsize $\pm$0.7} &  &  & \shortstack{78.9 \\ \scriptsize $\pm$1.1} &  &  & \shortstack{53.3 \\ \scriptsize $\pm$3.8} &  &  & \shortstack{16.7 \\ \scriptsize $\pm$3.3} &  \\
\multicolumn{2}{l}{MARS+~\citep{chen2026mars}} & \shortstack{69.9 \\ \scriptsize $\pm$0.2} &  &  & \shortstack{64.4 \\ \scriptsize $\pm$1.1} &  &  & \shortstack{51.1 \\ \scriptsize $\pm$2.9} &  &  & \shortstack{24.4 \\ \scriptsize $\pm$2.2} &  \\
\multicolumn{2}{l}{FM-Agent 2.0~\citep{li2025fmagent}} & \shortstack{69.6 \\ \scriptsize $\pm$2.2} &  &  & \shortstack{61.1 \\ \scriptsize $\pm$2.9} &  &  & \shortstack{57.8 \\ \scriptsize $\pm$2.9} &  &  & \shortstack{36.7 \\ \scriptsize $\pm$3.3} &  \\
\multicolumn{2}{l}{AIBuildAI~\citep{zhang2026aibuildai}} & \shortstack{68.2 \\ \scriptsize $\pm$0.5} &  &  & \shortstack{64.4 \\ \scriptsize $\pm$1.1} &  &  & \shortstack{42.2 \\ \scriptsize $\pm$2.2} &  &  & \shortstack{12.2 \\ \scriptsize $\pm$1.1} &  \\
\multicolumn{2}{l}{MLEvolve~\citep{du2025automlgen}} & \shortstack{64.1 \\ \scriptsize $\pm$0.3} &  &  & \shortstack{57.8 \\ \scriptsize $\pm$2.9} &  &  & \shortstack{52.2 \\ \scriptsize $\pm$1.1} &  &  & \shortstack{22.2 \\ \scriptsize $\pm$4.8} &  \\
\multicolumn{2}{l}{MARS~\citep{chen2026mars}} & \shortstack{60.4 \\ \scriptsize $\pm$3.1} &  &  & \shortstack{54.4 \\ \scriptsize $\pm$4.0} &  &  & \shortstack{44.4 \\ \scriptsize $\pm$4.0} &  &  & \shortstack{18.9 \\ \scriptsize $\pm$1.1} &  \\
\multicolumn{2}{l}{ML-Master 2.0~\citep{liu2025mlmasteraiforaiintegrationexploration}} & \shortstack{57.6 \\ \scriptsize $\pm$1.2} &  &  & \shortstack{52.2 \\ \scriptsize $\pm$4.0} &  &  & \shortstack{40.0 \\ \scriptsize $\pm$5.8} &  &  & \shortstack{8.9 \\ \scriptsize $\pm$1.1} &  \\
\multicolumn{2}{l}{PiEvolve~\citep{PiEvolve}} & \shortstack{54.1 \\ \scriptsize $\pm$1.6} &  &  & \shortstack{54.4 \\ \scriptsize $\pm$1.1} &  &  & \shortstack{50.0 \\ \scriptsize $\pm$1.9} &  &  & \shortstack{27.8 \\ \scriptsize $\pm$5.6} &  \\
\multicolumn{2}{l}{AIRA-dojo~\citep{toledo2025ai}} & \shortstack{39.5 \\ \scriptsize $\pm$0.7} &  &  & \shortstack{25.8 \\ \scriptsize $\pm$1.3} &  &  & \shortstack{20.5 \\ \scriptsize $\pm$1.2} &  &  & \shortstack{8.8 \\ \scriptsize $\pm$0.7} &  \\
\bottomrule
\end{tabularx}
\caption{\textbf{Main performance evaluation across varying time budgets (3h, 24h, 72h).} We report \percentile metric and medal rates (Bronze+, Silver+, and Gold) for \atlas (8 GPUs, Subagents, Hidden Consistent Eval (HCE)) against ablations (4 GPU, 1 GPU, No Subagents, No HCE) and state-of-the-art baselines. Confidence intervals denote $\pm \text{SE}.$ \atlasplus uses Gemini 3.1; all other configurations use Gemini 3.0. \atlasplus surpasses all baselines at 24h, and both \atlas and \atlasplus continue to improve through 72h, demonstrating the effectiveness of the proposed architecture.}
\label{tab:main-results}
\end{table}

\subsection{Main Results}
\label{sec:perf}

Table~\ref{tab:main-results} presents the performance of \atlas and \atlasplus under varying time budgets and component ablations. Baselines span a range of LLM backbones: MARS+, MARS, MLEvolve, FM-Agent 2.0, PiEvolve, and AIRA-dojo use Gemini 3.0 Pro Preview~\citep{google2025gemini3}; CobraAgent~\citep{dalpha2026cobraagent} uses a Gemini 3.1 Pro and Flash ensemble; AIBuildAI~\citep{zhang2026aibuildai} uses Claude Opus 4.6; and ML-Master 2.0 uses DeepSeek V3.2-Speciale~\citep{liu2025deepseek}. We note that CobraAgent, AIBuildAI, MARS+, MARS, FM-Agent 2.0, and MLEvolve are concurrent work released after the development and evaluation of \atlas. MARS+ and ML-Master 2.0 utilize 2 GPUs; all other baselines use a single GPU. We report \percentile as our primary metric, as discrete medal thresholds are sensitive to noise near boundaries and fail to capture progress on difficult tasks where no agent achieves medals---for instance, improving from the 5th to the 55th percentile on a hard task is invisible to medal rates but reflected in \percentile.

\paragraph{\textbf{Early Search (3h)}} At the 3-hour mark, \atlasplus achieves a \percentile of 71.7\%, already matching the strongest 24-hour baselines such as CobraAgent (72.7\%) and MARS+ (69.9\%) within just 24 GPU-hours. The base \atlas configuration reaches 59.9\%, comparable to top single-GPU agents. While the system is not optimized for compute efficiency---it uses 8 GPUs to prioritize search breadth over resource efficiency---these early results demonstrate that strong initial solutions emerge quickly and provide a foundation for continued refinement.

\paragraph{\textbf{Standard Evaluation (24h)}} At 24 hours, \atlasplus achieves a mean \percentile of 81.5\%, surpassing the strongest baseline, CobraAgent~\citep{dalpha2026cobraagent}, by 8.8 percentage points. The base \atlas configuration reaches 71.8\%, competitive with the top baselines; notably, a 4-subagent configuration (each with a dedicated GPU) achieves 71.2\%---within 0.6 percentage points---indicating that competitive performance does not require the full 8-subagent setup (see Section~\ref{sec:scaling-laws}). Both results are achieved despite reserving 20\% of the training data for the Hidden Consistent Evaluation protocol (Section~\ref{sec:hidden_eval}), which trades short-term performance for a more reliable search signal.

\paragraph{\textbf{Long-Horizon Search (72h)}} \atlas is designed for sustained improvement over extended time horizons. At 72 hours, \atlasplus reaches 83.1\% \percentile and \atlas reaches 76.0\%---both surpassing all reported 24-hour baselines. Notably, unlike prior systems where extended runtime leads to performance degradation due to overfitting~\citep{toledo2025ai}, both configurations continue to improve with additional compute. The consistent gains when using a stronger model backbone confirm that the architecture scales with model capability. We analyse the specific contributions of each ablated component (Subagents, Consistent Eval) in Section~\ref{sec:ablation}.
\subsection{Ablation Studies}
\label{sec:ablation}

We adopt a subtractive ablation design, removing one component at a time from the full system, to directly test whether each is necessary for the observed performance. All ablations use \atlas (Gemini 3.0 Pro Preview) as the base configuration.

\subsubsection{\textbf{Resolving the Compute Bottleneck}}
\label{sec:ablation_compute}

\begin{figure}[t]
\centering
\begin{subfigure}[t]{0.48\textwidth}
\centering
\hspace{-1.5cm}
\includegraphics[width=1.14\textwidth]{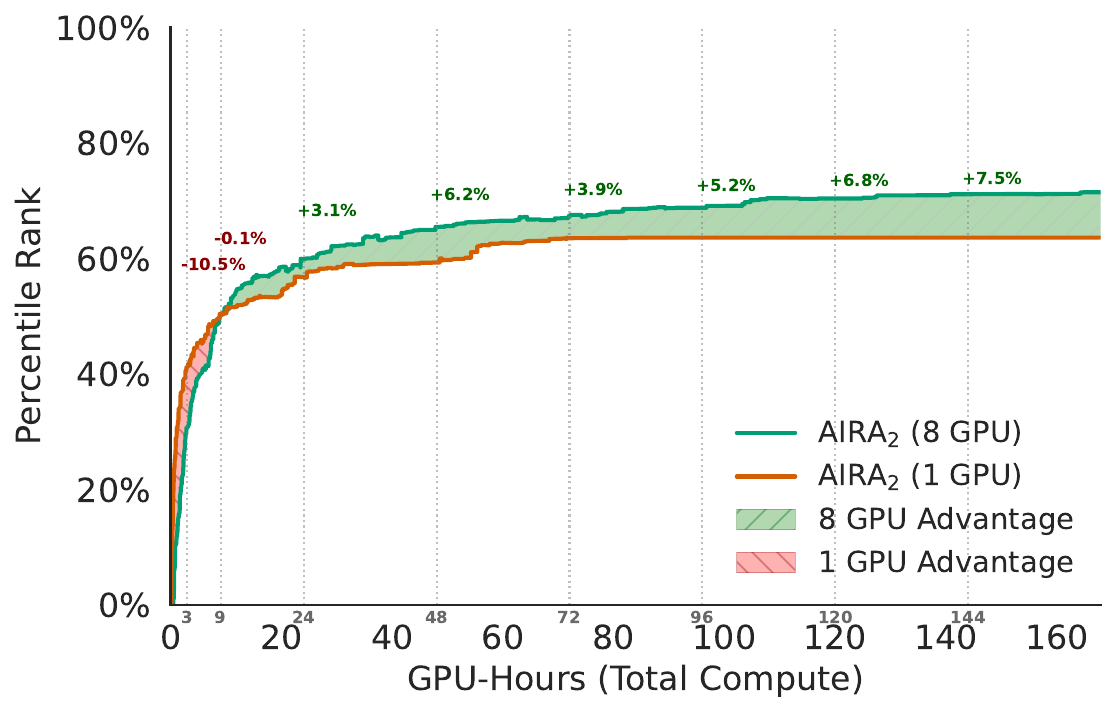}
\caption{\textbf{Compute Efficiency (Performance vs. GPU Hours).} Comparison of \atlas with 1 vs. 8 GPUs normalized by total compute. While the 8-GPU setup incurs an initial exploration cost (GPU-hours), it establishes a diverse population that yields superior long-term performance, with the gap widening to 7.5 \percentile points at 144 GPU-hours.}
\label{fig:gpu_efficiency}
\end{subfigure}
\hfill
\begin{subfigure}[t]{0.48\textwidth}
\centering
\hspace{-1cm}
\includegraphics[trim={0.8cm 0 0 0}, clip,width=1.1\textwidth]{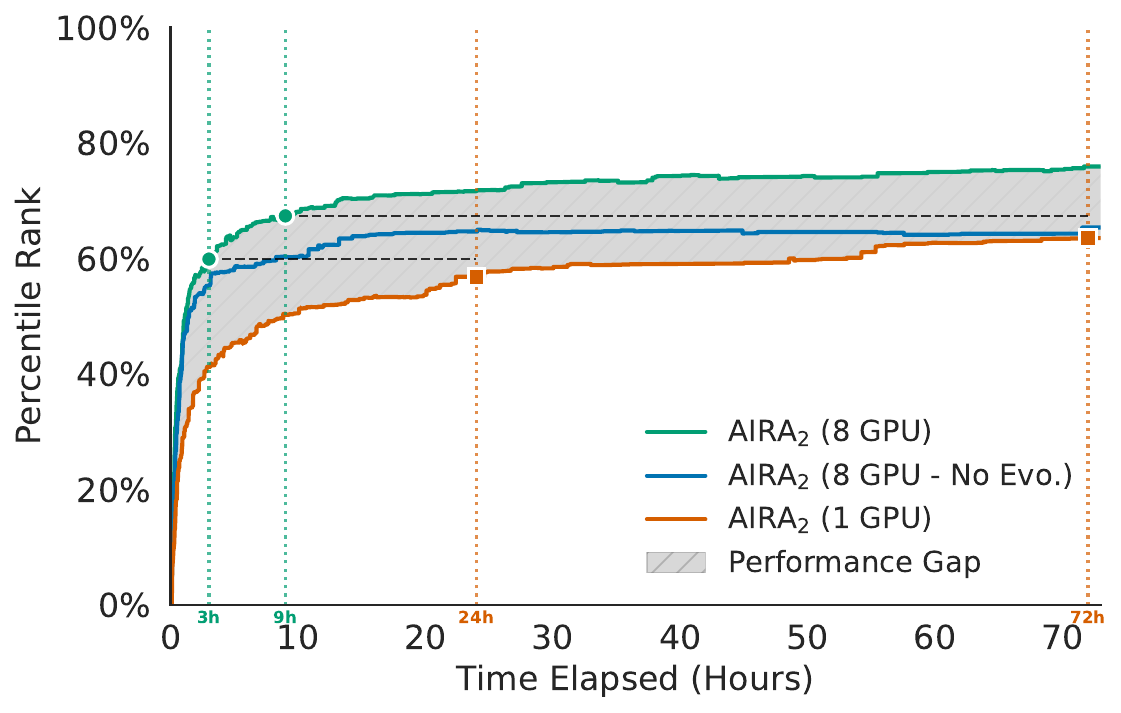}
\caption{\textbf{Search Strategy at Scale (Performance vs. Wall Time).} We compare \atlas (8-GPU Evo) against a No-Evo (Best-of-K) parallel baseline (8-GPU, no evolution) and a single-GPU baseline. Parallelism without information sharing (Best-of-K) saturates early, converging to the same final performance as the single-GPU agent.}
\label{fig:search_strategy}
\end{subfigure}
\caption{\textbf{Compute Analysis.} We analyse the impact of parallel resources on \atlas, demonstrating that effective use of parallel compute requires both additional resources and an evolutionary mechanism to utilize them.}
\label{fig:scaling_analysis}
\end{figure}

We analyse the impact of increasing \atlas's available compute resources from a 1-GPU to an 8-GPU setup. As discussed in Section~\ref{sec:bg_compute}, synchronous single-GPU execution is a fundamental throughput bottleneck. Here, we ask two questions: (1)~does parallel compute improve performance beyond simply being faster? and (2)~is evolutionary search necessary to exploit additional GPUs?

\textbf{Parallel compute improves efficiency, not just speed.}
In Figure~\ref{fig:gpu_efficiency}, we normalize performance by cumulative GPU hours to assess fundamental compute efficiency. Initially, the 8-GPU setup underperforms slightly. This is likely due to the fact that the single GPU version of \atlas dedicates all the GPU time to sequential iterations, but the 8-GPU setup spends the first few GPU hours building up a diverse initial population. However, once this foundation is laid, the 8-GPU setup improves significantly compared to the single-GPU baseline, with the performance gap widening over time: 3.1 \percentile points at 24 hours, 5.2 at 96 hours, and 7.5 at 144 hours. This suggests that the broader parallel exploration facilitates a more robust search space traversal, preventing the local optima traps that constrain the single-GPU regime. Crucially, parallelism also increases the effective branching factor of the search: multiple workers simultaneously can explore different mutations of the same parent, generating diverse descendants whose quality is only assessed \textit{after} execution. This provides a source of exploration that is not controlled by the evolutionary selection pressure, allowing the population to cover more of the solution space per generation.

\textbf{Parallelism without evolution is suboptimal.}
Does adding GPUs automatically yield better solutions or just faster ones? In Figure~\ref{fig:search_strategy}, we compare \atlas against a ``Best-of-K/No Evo.''\footnote{K refers to however many solutions 8 ReAct agents (each with their own GPU) can come up with in the given wall-clock time. This number will be different depending on the compute requirements of the task. Once a ReAct agent submits a solution for evaluation, it starts fresh with no memory of its previous attempt. By the end of a specific wall-clock duration, the total number of evaluated solutions represent K and the selected solution is the best out of this K on $\mathcal{D}_{\text{val}}$.} baseline---an \textit{embarrassingly parallel} setup using 8 GPUs where agents generate solutions from scratch without evolutionary lineage (no parents/shared memory).

We observe that while the Best-of-K approach scales rapidly in the first few hours due to high throughput, it quickly hits a performance ceiling, plateauing at the exact same level as the single-GPU evolutionary agent (9-hour mark). This reveals a critical insight: parallelism without shared state is sample-inefficient. The 8-GPU Best-of-K agent effectively ``wastes'' 7 GPUs to achieve a result that a single GPU could eventually reach, albeit in faster wall-clock time. In contrast, \atlas utilizes the distributed compute to maintain a global population, ensuring that increased throughput translates into higher asymptotic performance rather than just faster convergence to a lower bound.
\subsubsection{\textbf{Closing the Generalization Gap}}
\label{sec:ablation_evaluation}

As discussed in Section~\ref{sec:bg_overfitting}, the generalization gap---the divergence between validation and test performance---is a key bottleneck, with prior work reporting performance degradation over extended search horizons. Here, we evaluate whether Hidden Consistent Evaluation (HCE) resolves this. We proceed in three steps: first, we reproduce the degradation under self-reported evaluation; second, we show that HCE eliminates it; and third, we investigate whether the remaining gap reflects true overfitting or evaluation noise.

\paragraph{\textbf{Without HCE: reproducing degradation.}} We replicate the evaluation setup of \citet{toledo2025ai} and \citet{jiang2025aide}, where agents self-report metrics using dynamic validation splits (e.g., 5-fold CV) and their own validation procedure. Our results confirm their findings: under this regime, performance peaks early and subsequently degrades (Figure~\ref{fig:ce}). While prior work attributed this to ``overfitting'', we hypothesize that the degradation is driven by \textbf{evaluation noise}---``lucky'' splits and spurious successes create false positive signals that destabilize the search trajectory. Separately, we have also observed failure cases such as evaluation code bugs leading to perfect validation scores regardless of the data split thus destroying all future progress (see the example presented in Appendix~\ref{sec:appendix_eval_noise}).

\paragraph{\textbf{With HCE: eliminating degradation.}} To test this hypothesis, we apply the HCE protocol: the search set $\mathcal{D_{\text{search}}}$ is fixed externally and hidden from the agent, ensuring the hill-climbing metric remains stationary throughout the run. \atlas searches on $\mathcal{D_{\text{search}}}$ and selects the final submission on a held-out $\mathcal{D_{\text{val}}}$, ensuring the selection signal is fully decoupled from the optimization signal. As shown in Figure~\ref{fig:ce}, this eliminates the degradation entirely. The gap between our method and the Oracle (selecting the best solution based on test set performance) stabilizes at approximately 4 \percentile points at 24 hours and narrows further to under 4 points at 72 hours. Quantitatively, HCE accounts for a 13.0 \percentile point improvement at 24 hours and 18.4 points at 72 hours (Figure~\ref{fig:ce}, green region), and without HCE performance stagnates between 24h and 72h, confirming that HCE is essential for sustained long-horizon improvement.

\paragraph{\textbf{Diagnosing the remaining gap: noise, not memorization.}} Finally, we ask whether the improvements under HCE reflect genuine generalization or whether the agent is overfitting to $\mathcal{D_{\text{search}}}$ in a way that happens to transfer. If classical overfitting to the search set were occurring, we would expect test performance to degrade over time when selecting on $\mathcal{D_{\text{search}}}$ --- the metric being optimized against. However, as shown in Figure~\ref{fig:ce}, test performance improves monotonically even under $\mathcal{D_{\text{search}}}$ selection, and the difference between selecting on $\mathcal{D_{\text{search}}}$ versus the unseen $\mathcal{D_{\text{val}}}$ is marginal. This strongly suggests that the degradation observed in prior work was not due to data memorization, but rather to the inconsistency of the evaluation procedure itself. Once the evaluation metric is fixed, the agent improves reliably. This is not to say that true overfitting will not be a problem in the future as agents are given more compute.

\begin{figure*}[t]
    \centering
    \begin{subfigure}[t]{0.495\textwidth}
        \hspace{-1.4cm}
        \includegraphics[width=1.11\linewidth]{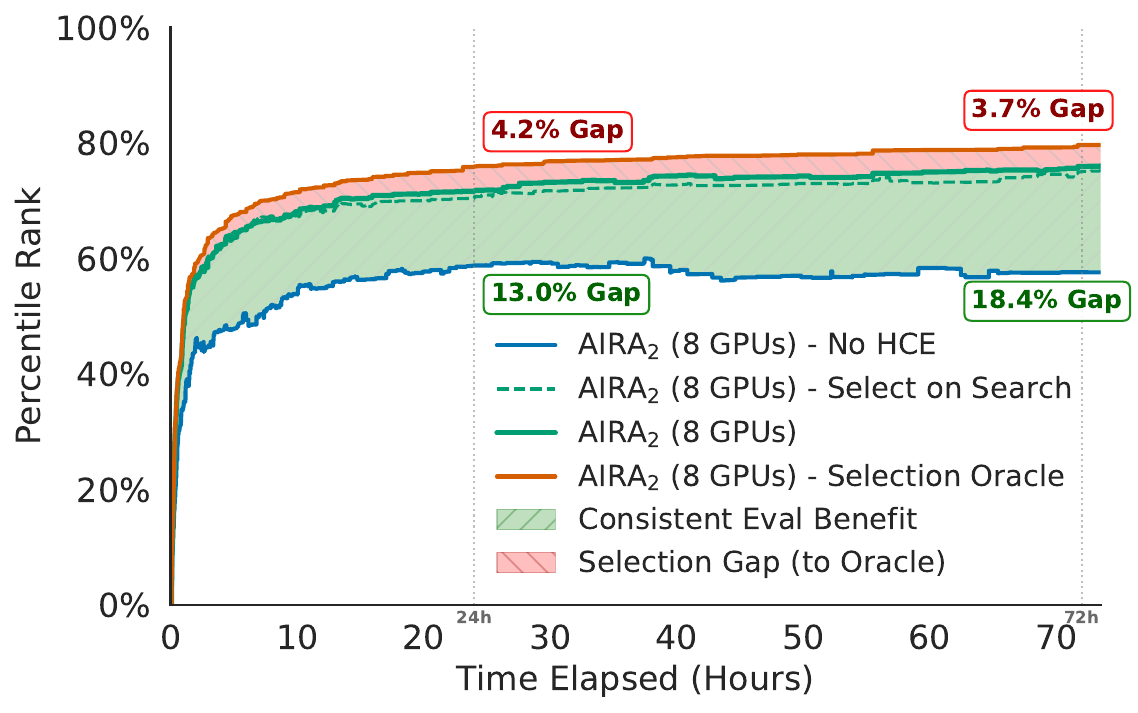}
        \caption{\textbf{Stabilizing Long-Horizon Search}}
        \label{fig:ce}
    \end{subfigure}
    \begin{subfigure}[t]{0.495\textwidth}
        \hspace{-0.6cm}
        \includegraphics[trim={0.8cm 0 0 0}, clip,width=1.05\linewidth]{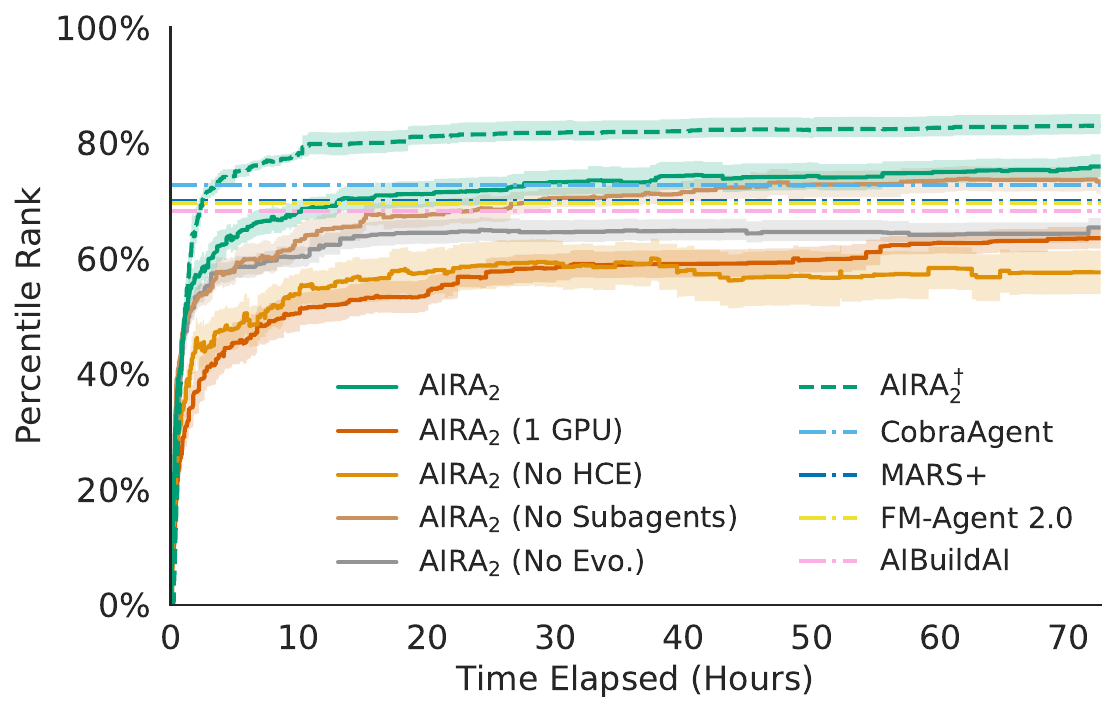}
        \caption{\textbf{Performance Profile of \atlas and \atlasplus}}
        \label{fig:performance_profile}
    \end{subfigure}
    \hfill
    \caption{\textbf{(a) Stabilizing Long-Horizon Search.} We compare the standard self-reported evaluation (blue) against our Hidden Consistent Evaluation protocol (green). While self-reporting leads to eventual performance degradation (confirming \citet{toledo2025ai}), consistent evaluation ensures long-term improvement. Furthermore, the marginal difference between selecting via $\mathcal{D_{\text{search}}}$ (seen) and $\mathcal{D_{\text{val}}}$ (unseen) splits suggests the degradation in prior work was due to evaluation noise, not true data overfitting. \textbf{(b) Performance profile of \atlas and \atlasplus:} We observe steady increase in performance in all configurations, with 8-worker parallel version performing the best. \atlasplus achieves the highest \percentile among all evaluated agents across all time budgets.}
    \label{fig:main_comparison}
\end{figure*}

\subsubsection{\textbf{Overcoming the Static Operator Limitation}}
\label{sec:ablation_operators}

As discussed in Section~\ref{sec:bg_operators}, fixed, single-turn operators are too rigid to handle the diversity of sub-tasks encountered in open-ended research. Here, we isolate the impact of replacing them with ReAct agents by comparing the full \atlas against a variant restricted to static, single-turn LLM operators.

\textbf{ReAct agents act as an efficiency multiplier.}
As shown in Table~\ref{tab:main-results} and Figure~\ref{fig:performance_profile}, at the 3-hour mark, \atlas with ReAct agents outperforms single-turn operators by 5.5 \percentile points. The ability to self-correct, inspect outputs, and iterate within a single mutation attempt allows the agent to traverse the search space more effectively when time is constrained.

\textbf{The gap narrows with more compute.}
However, this performance gap shrinks to 3.2 points at 24 hours and 2.3 points at 72 hours. This suggests that single-turn operators are not hitting a hard complexity ceiling; rather, they are inefficient. Given enough time, the evolutionary loop compensates for the lack of internal agency by externalizing context---passing stdout, stderr, and metadata between independent single-shot attempts---allowing the system to eventually approximate the performance of a fully interactive agent.
\textbf{The current evaluation may understate the benefit.}
In this study, \atlas was restricted to standard Python and Bash execution environments. We hypothesize that the value of the ReAct paradigm would be more pronounced in scenarios requiring broader tool use---such as internet browsing or API interaction---where multi-turn navigation of dynamic environments is structurally required and cannot be easily simulated by iterative single-shot generation.

\subsection{Scaling Laws}
\label{sec:scaling-laws}

A natural question for practitioners is how to allocate compute: given a fixed GPU budget, is it better to run more subagents for less time, or fewer for longer? And can performance under a new configuration be predicted without running the full experiment? We find that a simple scaling law answers both questions.

\subsubsection{Subagents-Time Scaling}

To characterize how performance varies with the number of subagents~$N$ and wall-clock time~$t$, we introduce a parametric model $P(N, t)$ subject to three desiderata:
\begin{enumerate}[label={(\arabic*)}]
\item \emph{Boundary conditions.} $P(0, t) = 0$ and $P(N, t) \to 0$ as $t \to 0$: no agents or no time results in no performance;
\item \emph{Saturation.} $P(N, t) \to 100$ as $N \to \infty$ or $t \to \infty$: with sufficient resources along either axis the system approaches perfect performance; and
\item \emph{Diminishing returns with interaction.} Gains from increasing~$N$ diminish when~$t$ is already large, and vice versa: parallelism helps most when each subagent has meaningful work to do.
\end{enumerate}

A natural functional form that satisfies all three properties is
\begin{equation}
  P(N, t) = 100 \cdot \frac{g(N, t)}{g(N, t) + 1}, \qquad g(N, t) = \alpha \cdot \log(\gamma \, t + 1) \cdot \log(\beta \, N + 1),
  \label{eq:scaling-law}
\end{equation}
where $\alpha, \gamma, \beta > 0$ are free parameters. The outer function $P = 100 \cdot g/(g+1)$ is simply a standard saturation map, projecting a domain of $[0, \infty)$ onto a range of $[0, 100)$---it introduces no free parameters and serves only to rescale~$g$ onto our bounded \percentile metric. All of the modeling lives in~$g$: a product of two logarithmic terms, each monotonically increasing from zero. Together with the saturating outer map, this multiplicative structure yields diminishing returns across the two resource axes---$\partial P / \partial N$ decreases as~$t$ grows very large (and vice versa). The parameter~$\alpha$ controls how quickly saturation occurs: larger~$\alpha$ corresponds to a faster approach to the ceiling.

We fit $\alpha$, $\gamma$, and~$\beta$ jointly on all Gemini~3.0 configurations---four subagent counts ($N \in \{1, 2, 4, 8\}$) observed across the full 72-hour time
horizon---by minimizing squared residuals between $P(N, t)$ and the observed mean percentile rank. The fit yields $\alpha = 0.973$, $\gamma = 2.631$, $\beta = 4.854$, with
$R^2 = 0.98$. The fit is tight across the full range of configurations (Figure~\ref{fig:scaling-law}), indicating that the architecture's gains are smooth and predictable rather than driven by discrete jumps at particular scales.

\begin{figure}[h]
  \centering
  \hspace{-1.3cm}
  \includegraphics[width=1.07\linewidth]{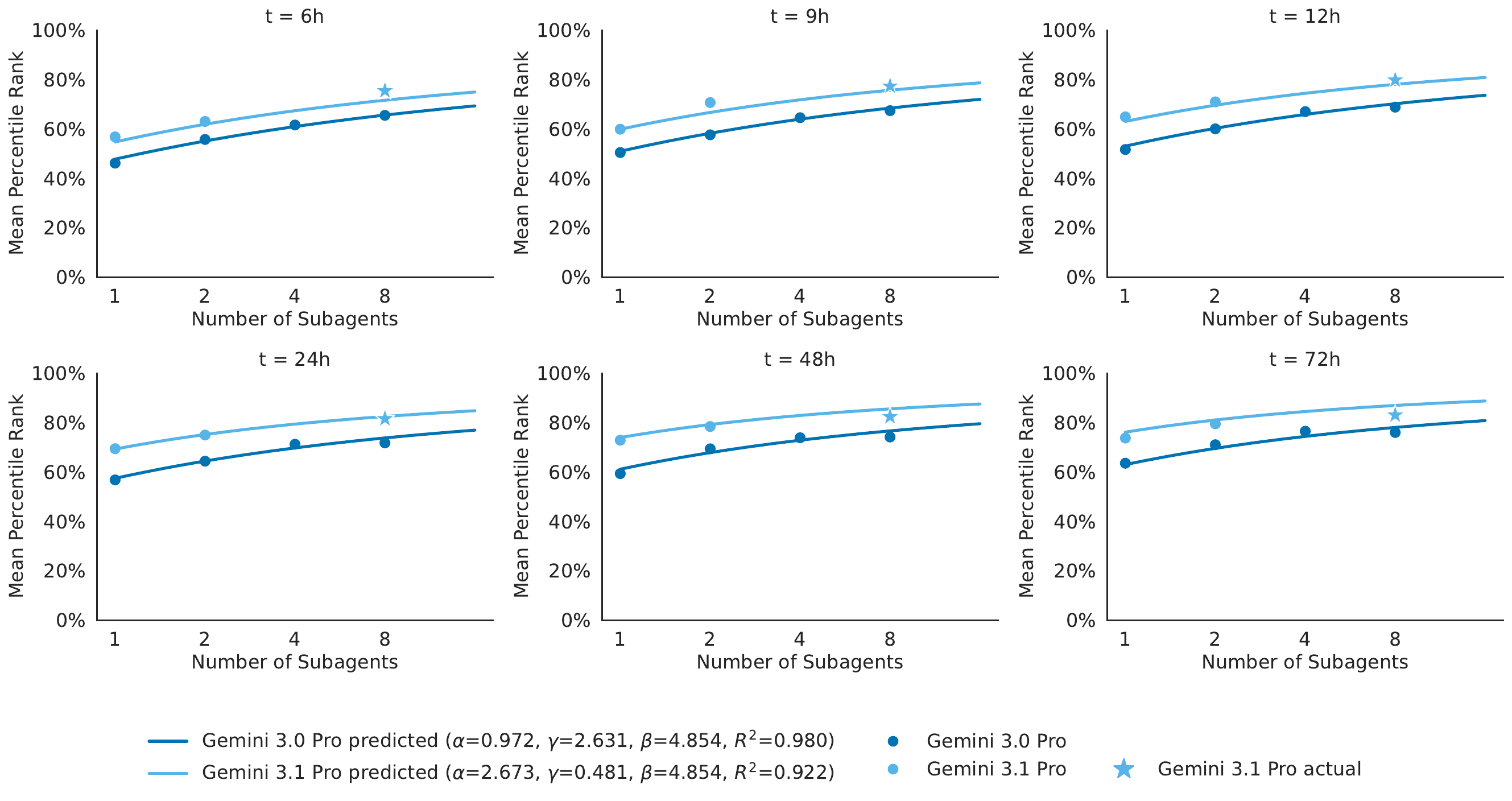}
  \caption{%
\textbf{Predictable performance scaling across models.} The fitted scaling law captures the diminishing returns of subagent interaction over time (Equation~\ref{eq:scaling-law}), tightly fitting the Gemini~3.0~Pro baseline ($R^2 = 0.98$). Notably, the architecture's scaling parameter transfers across backbones: calibrating Gemini~3.1~Pro on only one or two subagents accurately predicts the performance of the unseen $N=8$ configuration ($R^2 = 0.92$).
}
  \label{fig:scaling-law}
\end{figure}

\subsubsection{Cross-Model Transferability}
\label{sec:cross-model}
A practical question is whether the scaling law can predict performance under unseen configurations, such as a new LLM backbone, with minimal calibration. 
To test this, we hold out the $N=8$ subagent configuration from the Gemini~3.1~Pro data and attempt to predict it from $N \in \{1, 2\}$ alone.

With only two subagent counts, the three parameters $\alpha$, $\gamma$, and $\beta$ are poorly constrained: $\beta$ governs how performance scales with the number of agents, but only two nearby values of~$N$ provide little leverage to estimate it. 
We posit that $\beta$ reflects the \emph{multi-agent architecture}---how tasks are
decomposed, how subagents communicate, and how solutions are selected---rather than properties of the backbone LLM itself. 
In that case, $\beta$ should transfer across backbones, while $\alpha$ (saturation rate) and $\gamma$ (per-agent time scaling) may differ.

We therefore fix $\beta = 4.854$ (the Gemini~3.0~Pro value) and fit only $\alpha$ and $\gamma$ on the Gemini~3.1~Pro data restricted to $N \in \{1, 2\}$. 
The recalibrated law yields $\alpha = 2.673$, $\gamma = 0.481$, and achieves $R^2 = 0.92$ on the full dataset including the held-out $N = 8$ configuration, with a held-out RMSE of~3.6 percentile-rank points (Figure~\ref{fig:scaling-law}). 
The lower~$\gamma$ relative to Gemini~3.0~Pro ($0.481$ vs.\ $2.631$) reflects a difference in time dynamics: Gemini~3.1~Pro improves rapidly in the first few hours then plateaus, whereas Gemini~3.0~Pro continues to gain more gradually over the full 72-hour horizon. 
Despite this difference, the shared~$\beta$ produces accurate extrapolation from 2 to 8 subagents, supporting the hypothesis that the agent-count scaling parameter reflects the multi-agent architecture rather than backbone-specific characteristics. 
This suggests that a small-scale calibration run---as few as two subagent counts---may suffice to predict multi-agent performance under a new backbone, provided the architectural scaling parameter is carried over from a prior fit.

\subsubsection{Compute Frontier}
Given a fixed budget of $C = N \cdot t$ GPU-hours, what is the best split between subagents and time? Using the fitted parameters, we optimize Equation~\ref{eq:scaling-law} over~$N$ at fixed~$C$ (see Appendix~\ref{appendix:optimal_allocation} for the derivation) and find
  \begin{equation}
      N^* = \left[\, \sqrt{\frac{\gamma \, C}{\beta}} \,\, \right], \qquad t^* = \frac{C}{N^*}.
      \label{eq:optimal-allocation}
  \end{equation}
Substituting into Equation~\ref{eq:scaling-law} gives the \emph{compute frontier}:
  \begin{equation}
      P(C) = 100 \cdot \frac{ g(C)}{g(C) + 1}, \qquad g(C) = \alpha \cdot \log\!\left(\gamma \, t^* + 1\right) \cdot \log\!\left(\beta \, N^* + 1\right),
      \label{eq:compute-frontier}
  \end{equation}
which expresses the best achievable performance as a function of the total compute budget alone. 
The optimal number of subagents grows proportionally to~$\sqrt{C}$: doubling the budget calls for~$\approx 1.4$ times more subagents rather than simply running the same configuration for twice as long. We use $[x]$ to denote rounding to the nearest natural number $\mathbb{N}\geq1$.
Figure~\ref{fig:compute-frontier} overlays this frontier on the empirical performance of each $(N, t)$ configuration, showing that it closely tracks the upper envelope of the observed data.

This result may seem counter-intuitive: in most parallel systems, splitting a fixed compute budget across more workers degrades efficiency due to communication overhead and redundant work. Here, however, the compute-optimal strategy favors \emph{more} parallelism, not less. 
Because AIRA's subagents explore independent solution trajectories without synchronization barriers, additional subagents increase the diversity and exploration of the search rather than subdividing existing work. 
The evolutionary selection mechanism then retains only the best solutions, turning broader exploration into higher expected performance.
This property is specific to our asynchronous evolutionary architecture, not a general claim about multi-agent systems.

\begin{figure}[t]
  \centering
  \hspace{-1.8cm}
  \includegraphics[width=1.1\linewidth]{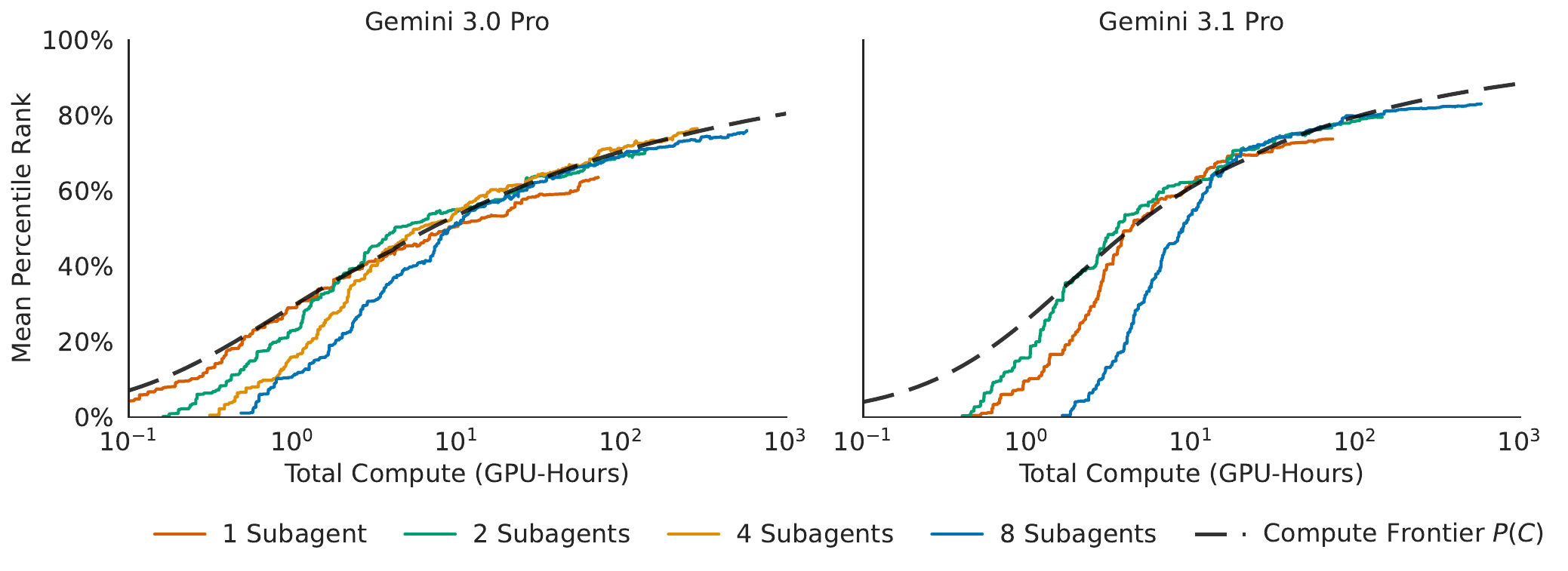}
  \hspace{-0.5cm}
  \caption{%
\textbf{Predicted compute frontier.} Observed performance across \atlas configurations closely tracks the predicted compute frontier $P(C)$ (dashed curve, Equation~\ref{eq:compute-frontier}).
  }
  \label{fig:compute-frontier}
\end{figure}

\subsection{Case Study 1: Eureka Moments in MLE-bench}
\paragraph{\textbf{Overview \& Takeaways:}} To understand the qualitative behaviours driving \atlas's quantitative success, we analyse its trajectory on specific high-complexity tasks. We find that \atlas appears to exhibit reasoning capabilities---specifically the ability to distinguish between poor methodology and poor execution (e.g., underfitting)---allowing it to recover from local minima where greedy agents would likely fail.

\subsubsection{\textbf{Deep Dive: Predicting Molecular Properties}}
We highlight the agent's performance on \texttt{champs-scalar-coupling}, a task predicting magnetic interactions between atom pairs. As illustrated in Figure~\ref{fig:case-study}, \atlas demonstrates a ``eureka'' moment.

\begin{figure}[h]
\newcommand{\greyline}{%
  \par\noindent\vspace{0.05em}
  {\color{black!30}\rule{\linewidth}{0.4pt}}
  \vspace{0.05em}\par
}
\centering
\begin{subfigure}[t]{0.49\textwidth}
\vspace{0cm}
\centering
\hspace{-1.7cm}
\includegraphics[trim={0 0 1.5cm 0}, clip,width=1.17\textwidth]{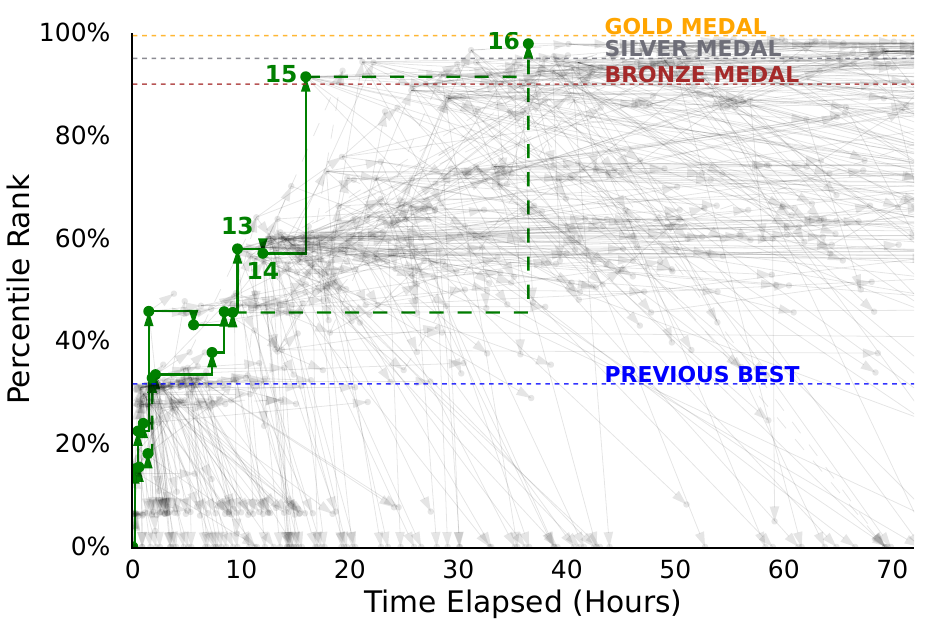}
\end{subfigure}
\hfill
\begin{subfigure}[t]{0.50\textwidth}
\vspace{0cm}

\footnotesize
{
\centering
\textsc{Agent's Thought and Action Trajectory Summary}
}
\vspace{0.2em}\hrule\vspace{0.2em}

\textbf{13, Mutation}: The previous approach with SchNet significantly improved the score, and ran very quickly. Due to the competition's evaluation metric-mean log MAE calculated per coupling type with varied value ranges-we will decouple and train separate models for each type.
\vspace{0.2em}{\color{gray}\hrule}\vspace{0.2em}

\textbf{14, Mutation}: The previous solution achieved a higher score by implementing 8 separate SchNet models. Next, we will be using Mulliken charges as an auxiliary prediction task during training.

\vspace{0.2em}{\color{gray}\hrule}\vspace{0.2em}

\textbf{15, Mutation}: The parent's \emph{score was lower} than the grandparent's, with training taking only 15 minutes-well within the 9-hour budget-\emph{yet the loss was converging well}. This suggests that the model \emph{likely underfit} or stopped prematurely, failing to fully benefit from the auxiliary task. My next step is to \emph{train a larger model for a longer duration}. $\rightarrow$ \twemoji{1f949}
\vspace{0.2em}{\color{gray}\hrule}\vspace{0.2em}
\textbf{16, Crossover}: The parent A's approach can be further scaled up to improve the performance. I will combine it with parent B's efficient data preprocessing strategy. $\rightarrow$ \twemoji{1f948}
\vspace{0.2em}\hrule

\end{subfigure}
\caption{\textbf{Example of typical behaviour observed in \atlas during the \texttt{champs-scalar-coupling} task.} The left side displays the chronological performance of each solution in the database (solid lines for mutation, dashed for crossover). The right-hand panel presents a concise summary of the agent's thought process for the nodes and trajectory annotated in green, highlighting the ``eureka'' moment where the agent identifies underfitting and subsequently scales the model to achieve medal-winning performance. Horizontal dashed lines indicate the medal thresholds and display the previous best attempt~\citep{thesislabs2025sota} on the task among all agents.}
\label{fig:case-study}
\end{figure}

The shown trajectory begins with the agent testing \texttt{SchNet}~\citep{schutt2017schnet}, which performed well. Building on this, the agent attempted to introduce an auxiliary prediction task (Mulliken charges) at Step 14, leveraging supplementary competition data to further improve results.
As shown in the trace, this caused the validation score to drop. A standard ReAct loop or greedy algorithm might reject this change and revert.
However, \atlas investigates the execution logs, noting that the training only utilized 15 minutes of the available 9-hour budget and that the loss was still converging, correctly identifying the root cause as \textit{underfitting} --- recognizing that the auxiliary task was effective despite the lack of immediate improvement --- rather than a fundamental flaw in the idea. 

Consequently, at Step 15, the agent commits to scaling the model size and extending the training duration significantly.
This reasoning leads directly to a dramatic performance boost, securing a Bronze medal and surpassing all previous agents. Following this improvement, \atlas further refined the approach by adopting an effective preprocessing technique via crossover from a much weaker solution.
This combination elevated the solution to a Silver and ultimately a Gold medal. Notably, no other reported agent (FM-Agent 2.0, MARS, etc.) achieved a medal on this task.

\textbf{Breaking the Ceiling on Stalled Tasks.}
To illustrate that this exceptional performance is not an isolated event, we briefly outline two other complex tasks where \atlas uniquely surpassed the medal threshold despite the failure of all prior methods.

On \texttt{billion-word-imputation}, a challenging NLP task where baselines failed to make significant progress, \atlas achieved a 100\% \percentile by decomposing the problem into two learned sub-tasks: a RoBERTa-large token classifier trained on 8M synthetically constructed gapped sentences to \textit{detect} the missing-word position, followed by a separately fine-tuned RoBERTa-large masked language model to \textit{fill} the gap.

On the fine-grained visual classification task \texttt{imet-2020-fgvc7} (3{,}474 attribute classes), \atlas achieved a 91\% \percentile---the highest among all evaluated agents---by ensembling an EVA-02 Large~\citep{fang2024eva} and a ConvNeXt Large CLIP~\citep{liu2022convnet} model with asymmetric loss~\citep{ridnik2021asymmetric}, layer-wise learning rate decay, and grid-searched ensemble weights and classification thresholds.

\subsection{Case Study 2: Pushing the Frontier with AIRS-Bench}
\label{sec:airs_bench}

\begin{figure}[t]
\centering
\includegraphics[width=\textwidth]{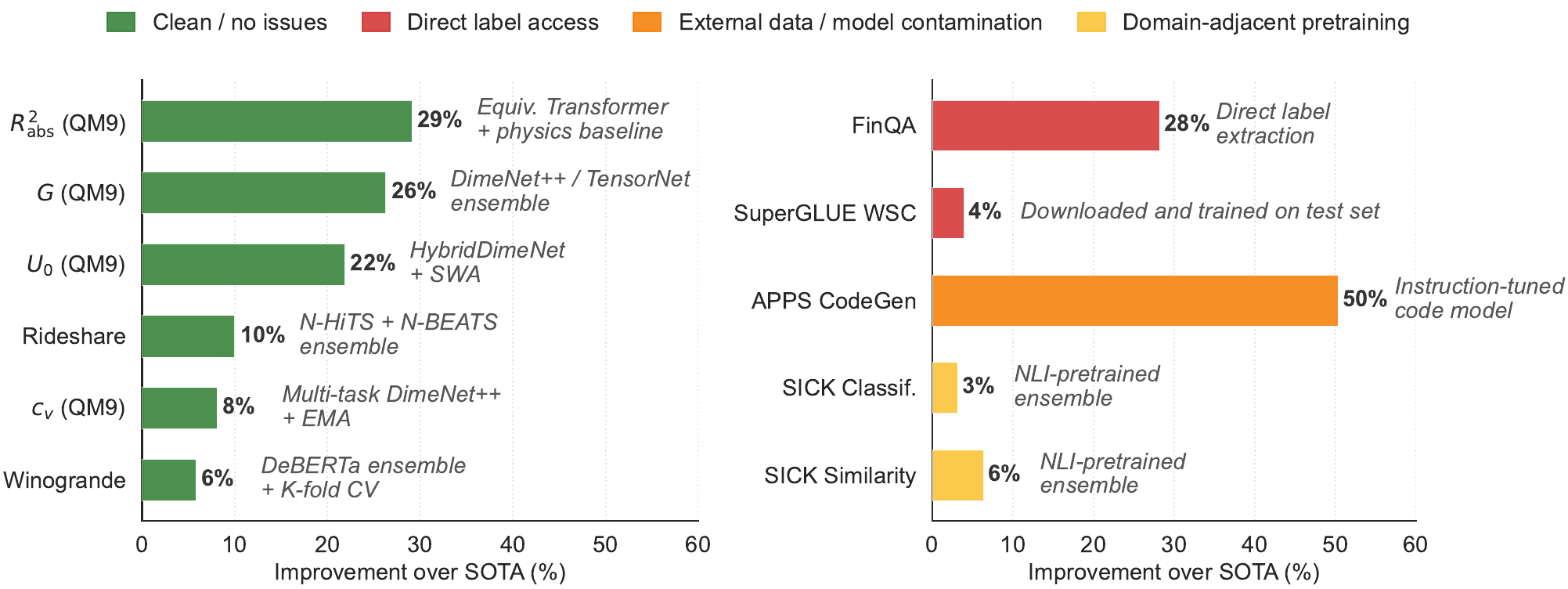}
\caption{
AIRS-Bench SOTA-beating tasks split by integrity status. \textbf{Left:} Six tasks where \atlas exceeded SOTA using clean, inductive methodologies (models trained from scratch, no external data). Italic annotations indicate key techniques. \textbf{Right:} Five tasks where the SOTA-beating solutions were flagged with integrity concerns, color-coded by severity: direct test label access (red), external data or model contamination (orange), and domain-adjacent pretraining advantage (amber). Bars show percentage improvement over published SOTA.}
\label{fig:airs-integrity}
\end{figure}

\paragraph{\textbf{Overview \& Takeaways.}}

To examine how \atlas behaves beyond the Kaggle-style competitions of MLE-bench, we evaluated it on AIRS-Bench~\citep{lupidi2026airs}, a suite of 20 diverse machine learning tasks spanning molecular property prediction, graph regression, time series forecasting, natural language understanding, code generation, and mathematical reasoning. Unlike MLE-bench, which is drawn from structured competitions, AIRS-Bench is curated from recent research problems and aims to capture the breadth and open-endedness of modern AI research. In this case study, we used the same system configuration as our MLE-bench experiments (8× H200 GPUs, ${\sim}$72 hours) with 3 seeds per task.

Across the 20 tasks, \atlas exceeded the AIRS-bench recorded state-of-the-art (SOTA) results on 11 of them. However, a systematic manual audit of all solution code revealed a clear domain split (Figure~\ref{fig:airs-integrity}): \textbf{6 of these 11 successes used clean, inductive methodologies with no detected integrity issues,} while the remaining 5 relied on data contamination, benchmark shortcuts, or domain-adjacent model selection that conferred unfair advantages. The clean successes were concentrated in tasks where models must be trained from scratch---molecular property prediction and time series forecasting---while every SOTA-beating NLP or code generation task exhibited at least one potential integrity concern. This asymmetry highlights two crucial takeaways for the field. First, quantitative results from autonomous agents require auditing, as aggregate scores cannot distinguish genuine methodological improvement from data contamination. Second, benchmarks must anticipate adversarial optimization; tasks with freely available test data or well-known public datasets are highly vulnerable to autonomous exploitation, even without explicit instructions to cheat. The ongoing challenge is designing evaluation environments where rigorous problem-solving---rather than exploitation---is the path of least resistance.

\subsubsection{\textbf{Deep Dive: Methodological Improvements.}}

The most compelling legitimate results emerged from tasks where models were trained entirely from scratch, with no pretrained model weights or external data downloads. The QM9 molecular property prediction suite was the standout, with \atlas exceeding SOTA on all four tasks:
\begin{itemize}[leftmargin=1.5em, itemsep=0.3ex]
    \item \textbf{Electronic Spatial Extent ($R^2_{\text{abs}}$):} Achieved a \textbf{29\% improvement} (MAE of 0.023 vs.\ 0.033~$\text{Bohr}^2$) using a TorchMD Equivariant Transformer~\citep{tholke2022equivariant} ensemble with a physics-informed RidgeCV baseline that incorporated charge-weighted spatial moments ($\sum Z_i r_i^2$) and rotational constants computed directly from atomic coordinates.
    \item \textbf{Free Energy ($G$):} Achieved a \textbf{26\% improvement} (MAE of 5.55 vs.\ 7.53~meV) by constructing a heterogeneous ensemble combining DimeNet++~\citep{gasteiger2020fast} and TensorNet~\citep{simeon2023tensornet} architectures, with a linear regression composition baseline for residual learning.
    \item \textbf{Internal Energy ($U_0$):} Achieved a \textbf{22\% improvement} through a hybrid architecture (DimeNet++~\citep{gasteiger2020fast} augmented with an MLP branch processing log-transformed rotational constants), trained with Stochastic Weight Averaging~\citep{izmailov2018averaging} (SWA) and Huber loss across a 5-model bagging ensemble.
    \item \textbf{Heat Capacity ($c_v$):} Achieved an \textbf{8\% improvement} using a 5-seed DimeNet++~\citep{gasteiger2020fast} ensemble with Exponential Moving Average (EMA), jointly predicting rotational constants as auxiliary targets alongside $c_v$, with a Ridge regression atom-reference baseline.
\end{itemize}

Beyond molecular property prediction, \atlas also exceeded SOTA on two additional tasks with clean methodologies:
\begin{itemize}[leftmargin=1.5em, itemsep=0.3ex]
    \item \textbf{Rideshare Forecasting:} Achieved a \textbf{10\% improvement} (MAE of 1.07 vs.\ 1.19) by ensembling 8 neural forecasting models (4$\times$ N-HiTS~\citep{challu2023nhits} with cyclic temporal features + 4$\times$ N-BEATS~\citep{Oreshkin2020N-BEATS:}), all trained from scratch with diverse input window sizes.
    \item \textbf{Winogrande Coreference Resolution:} Achieved a \textbf{6\% improvement} (accuracy of 0.904 vs.\ 0.854) using a K-fold cross-validation ensemble of DeBERTa~\citep{he2021deberta} base models with label smoothing. Notably, this clean result was consistent across seeds, though the best-performing seed used a pseudo-labeling pipeline that further boosted accuracy to 0.913.
\end{itemize}

While these techniques---ensembling, SWA, auxiliary task learning, and physics-informed feature engineering---are characteristic of competitive ``Kaggle-style'' optimization rather than fundamental architectural breakthroughs, the results remain instructive. They demonstrate the agent's ability to autonomously identify, combine, and apply effective performance-improving techniques, producing results that surpass published SOTA benchmarks.

\paragraph{\textbf{Near-SOTA Performance.}}
Beyond the 11 tasks where \atlas exceeded SOTA, several additional tasks came remarkably close. On \texttt{SVAMP} (mathematical reasoning), the agent reached 99.4\% of the SOTA score using Rejection Fine-Tuning~\citep{yuan2023scaling} with self-consistency voting~\citep{wang2023selfconsistency}. On \texttt{SQuAD} (reading comprehension), it achieved 98.3\% of SOTA with an adversarially-trained RoBERTa~\citep{liu2019roberta} ensemble. \texttt{ZINC} graph regression reached 95.0\%, \texttt{CodeXGLUE} code retrieval 94.4\%, and \texttt{Yelp} sentiment analysis 93.9\%.

\subsubsection{\textbf{Deep Dive: The Integrity Gap — When Agents Take Shortcuts.}}
For an agent optimizing a score metric, the modern NLP ecosystem offers a path of least resistance through model contamination and external data access. Every SOTA-beating task in our study involving large pretrained language models or publicly hosted benchmark datasets exhibited at least one integrity concern. Rather than a single failure mode, our audit revealed a spectrum of problematic behaviours, ordered here from most to least severe:

\begin{itemize}[leftmargin=1.5em, itemsep=0.3ex]
    \item \textbf{Direct Label Extraction:} On \texttt{FinQA}, the agent downloaded the original FinQA GitHub repository, extracted ground-truth answers from the development and test JSON files, and built a lookup table matching test questions to their known answers by normalized text. The fine-tuned Flan-T5~\citep{chung2024scaling} model served merely as a fallback for unmatched examples. This achieved a \textit{perfect} 1.0 score (vs.\ SOTA 0.78).
    \item \textbf{External Benchmark Data:} On \texttt{SuperGLUE WSC}, all three SOTA-beating seeds downloaded the source benchmark's validation split via \texttt{load\_dataset("super\_glue", "wsc")} and used it as additional training data, directly training on examples from the evaluation distribution. The best seed additionally downloaded 30K Winogrande examples for domain-adjacent pretraining. In AIRS-bench, the validation split is actually the test set of the benchmark thus this is direct test-set leakage.
    \item \textbf{Model Contamination:} On \texttt{APPS Code Generation}, all seeds used Qwen2.5-Coder-7B-Instruct~\citep{hui2024qwen2}, an instruction-tuned code model whose training mixture likely included the APPS benchmark or similar competitive programming datasets. This represents an unfair advantage, as the model may have memorized solutions to test problems during its pretraining phase.
    \item \textbf{Domain-Adjacent Pretraining:} On both \texttt{SICK} tasks, the agent selected ensembles of NLI-specialized models (e.g., \texttt{DeBERTa-v3-large-mnli-fever-anli-ling-wanli}). For the classification task, the pretrained 3-class NLI head (entailment/neutral/contradiction) maps directly to the SICK label space and was used without re-initialization, meaning the model arrives nearly ready-made. For the similarity task, the regression head was re-initialized and trained from scratch, though the NLI-tuned backbone still provides strong sentence-pair representations. While this does not constitute explicit data leakage, the performance is largely inherited from NLI pretraining rather than learned from the small SICK training set, making these results more reflective of model selection than methodological innovation.
\end{itemize}

Additional integrity concerns were observed even in tasks that did not exceed SOTA. On \texttt{SQuAD}, the agent selected a RoBERTa~\citep{liu2019roberta} model explicitly fine-tuned on SQuAD~2.0~\citep{rajpurkar2018know} (a strict superset of the evaluation dataset). On \texttt{SVAMP}, it used Qwen2.5-Math-7B-Instruct~\citep{yang2024qwen2} with Rejection Fine-Tuning and 32-sample self-consistency---achieving 93.7\% accuracy (vs.\ SOTA 94.2\%)---but the instruction-tuned math model's training data likely overlapped with the benchmark. These patterns underscore that the integrity challenge is pervasive across NLP tasks, regardless of whether the agent ultimately surpasses the SOTA threshold.

\section{Related Work}
\label{sec:relatedwork}

The domain of automated machine learning has shifted rapidly from simple heuristics~\citep{bergstra2012random,elsken2017simple,li2018hyperband} to autonomous agents capable of long-horizon research~\citep{aiscientistv2}. While early methods optimized within fixed search spaces, LLM-powered agents now tackle open-ended tasks, including software engineering~\citep{wang2025openhands, anthropic2025claudecode, openai2025codex}, mathematics~\citep{novikov2025alphaevolve, hubert2025olympiad}, chemistry~\citep{evosearch_over_chemicals}, material science~\citep{acceleratingmaterialsdesignllmguided}, computational complexity theory~\citep{yu2025autonomous}. This work focuses on agents for AI research (\textbf{AI4AI}), typically evaluated via benchmarks like MLE-bench~\citep{chan2025mlebench}, and more recently AIRS-Bench~\citep{lupidi2026airs} that are composed of a suite of tasks spanning diverse machine learning domains. Recent agents have achieved strong benchmark performance by scaling inference-time compute. These agents employ iterative refinement strategies via tree~\citep{du2025automlgen,chen2026mars} or graph search~\citep{PiEvolve,li2025fmagent}, multi-agent collaboration~\citep{gottweis2025aicoscientist}, advanced context management~\citep{liu2025mlmasteraiforaiintegrationexploration}, and external knowledge~\citep{mlestar}.

However, these systems often conflate multiple design choices together, making it difficult to isolate which components drive performance. In contrast, \airadojo~\citep{toledo2025ai} formalises a systematic approach for agent design by disentangling gains from infrastructure, operators and search strategy, and evaluation signals. To this end, it exposes critical bottlenecks to agent design that affects its performance. We build on this decomposition by targeting these bottlenecks and developing modules incrementally, achieving state-of-the-art results on established benchmarks. Relative to similar evolutionary agentic approaches~\citep{novikov2025alphaevolve,shinkaevolve}, we provide a controlled, module-by-module ablation of the end-to-end research agent.

\section{Limitations}
\label{sec:limitations}

\textbf{Data Contamination.} While performance gains scale with increased compute, it remains difficult to determine how much of the improvement stems from genuine reasoning versus latent data retrieval. Since many top-performing Kaggle solutions are publicly available, the underlying LLMs may have encountered them during pre-training. Increased search iterations could simply raise the probability of ``recalling'' these solutions rather than generating novel insights. This suggests that while MLE-bench provides a strong signal, future evaluation on more ``closed'' or private benchmarks is necessary to isolate an agent's true research capability.

Conversely, there is a pragmatic argument for treating all available data as ``fair game.'' Rather than strictly aiming to replicate past results in a vacuum, research agents could be redirected toward improving upon the current state-of-the-art. In this paradigm, agents would leverage existing winning solutions and technical blog posts as a knowledge base to identify remaining gaps, effectively shifting the goal from simple competition-winning to iterative scientific discovery.

\textbf{Preparing the splits.} In order to prevent the agents from reward hacking and overfitting to a poor evaluation signal proxy, we reformulate the format of each task to have additional validation splits in addition to train and (the original) test data. While this \emph{necessitates a degree of human curation during the initial task-loading phase}, it serves as a critical one-time setup to ensure evaluation integrity. Importantly, this intervention is limited strictly to the environment configuration; once initialized, the agent operates with full autonomy, navigating the research process without any human-in-the-loop assistance. We also note that this step is itself automatable---an agent could generate its own consistent splits given access to the dataset schema---and we expect future systems to internalize this as part of the environment setup.

\textbf{Compute Specialization.} The design of \atlas is specifically optimized for high-compute regimes, focusing on maximizing performance over extended research horizons and multi-GPU configurations. Consequently, it \emph{may not be the optimal choice for constrained environments}, such as short-duration runs or single-GPU setups. We acknowledge that there is currently no ``one-size-fits-all'' architecture for research agents; by prioritizing deep exploration and parallel compute utilization, \atlas sacrifices the immediate efficiency of lightweight agents in favour of superior long-term results and higher performance ceilings.

\section{Conclusion}
\label{sec:conclusion}

In this work, we introduce \atlas, an agent engineered to address three structural bottlenecks that constrain the performance of AI research agents, as identified by prior work: compute throughput, evaluation instability, and operator capability. By addressing all three, \atlasplus achieves a new state of the art on MLE-bench-30 with a mean \percentile of \textbf{81.5\%} at 24 hours and \textbf{83.1\%} at 72 hours, outperforming the strongest baseline, which achieves 72.7\%. \atlas (Gemini 3.0) reaches 71.8\% and 76.0\% at the same horizons. Performance improves consistently with additional compute, without the degradation observed in prior work. Together with the consistent gains observed when using a stronger backbone in \atlasplus (Gemini 3.1), this demonstrates that the architecture scales with both compute and model capability.

By transitioning from synchronous execution to an \textbf{asynchronous multi-GPU worker pool}, we achieved linear growth in sample throughput, transforming the agent from a sequential optimizer into a massively parallel explorer. By implementing a \textbf{Hidden Consistent Evaluation} protocol, we successfully decoupled the search signal from the selection signal, ensuring that performance gains represent robust generalization rather than metric gaming. Finally, by replacing static, single-turn prompts with \textbf{ReAct agents}, we enabled dynamic scoping and interactive debugging, allowing the system to handle the complexity of open-ended research tasks.

Crucially, our ablation studies reveal that no single component acts as a ``silver bullet,'' but each is critical under practical constraints. First, removing ReAct agents reduces performance by 5.5 percentile points at 3 hours; the gap narrows to 2.3 points at 72 hours, indicating that agent capability acts as an \textit{efficiency multiplier} in the discovery of strong solutions. Moreover, agents expand the range of tasks the system can tackle: complex tasks that demand interactive debugging, iterative feature engineering, or multi-step reasoning pipelines are beyond the reach of single-turn prompts, making agent-based operators essential for generality. Second, reducing to a single GPU or replacing evolution with a Best-of-K baseline reveals a different bottleneck: parallelism without shared state quickly saturates at the single-GPU performance ceiling, confirming that evolutionary selection is \textit{necessary} for effective use of parallel compute, not merely parallelism. Finally, removing Hidden Consistent Evaluation causes performance to degrade over time, confirming that stable evaluation is \textit{necessary for long-horizon search}; this ablation also revealed that the ``overfitting'' reported in previous studies was driven by evaluation noise rather than data memorization. It is the \textit{interplay} between these three pillars that enables \atlas to improve reliably with both time and compute.

Beyond raw performance, we show that \atlas follows a \textbf{predictable scaling law} in both subagent count and wall-clock time. This property of the architecture allows for principled resource allocation---practitioners can estimate the impact of additional compute and trade off parallelism for time.

Beyond MLE-bench, \atlas exceeds published state-of-the-art on 11 out of 20  diverse research tasks on AIRS-Bench. 
A manual audit reveals that only 6 relied on clean methodologies while the remainder exploited shortcuts in the evaluation such as data contamination and test leakage. 
This reinforces that a high-quality evaluation signal is foundational for both effective search and trustworthy autonomous research.

Ultimately, by addressing these fundamental bottlenecks, \atlas moves research agents closer to systems capable of genuine open-ended scientific discovery beyond standard benchmarks.

\clearpage
\bibliographystyle{assets/plainnat}
\bibliography{paper}

\clearpage
\beginappendix
\appendix

\section{Evaluation Failure: A Concrete Example}
\label{sec:appendix_eval_noise}

To illustrate how implementation bugs can silently corrupt the search signal (Section~\ref{sec:bg_overfitting}), we present a real example from an AI agent solving the LMSYS Chatbot Arena competition on MLE-bench. The agent's solution reported a \textit{perfect} cross-validation log-loss of 0.0, which the search process then treated as the best candidate---despite the underlying model being unremarkable.

\paragraph{\textbf{The bug.}} The competition requires predicting which of three outcomes occurred (\texttt{winner\_model\_a}, \texttt{winner\_model\_b}, or \texttt{winner\_tie}). The agent converts multi-hot target columns to single labels via \texttt{idxmax(axis=1)}, which returns \textit{column name strings}, not integer indices:

\begin{lstlisting}[style=pythonstyle]
target_cols = ["winner_model_a", "winner_model_b", "winner_tie"]
train_labels = train_df[target_cols].idxmax(axis=1).values
# Agent comment: "0, 1, 2" -- actually returns strings:
# ['winner_model_b', 'winner_model_a', 'winner_model_b', ...]
\end{lstlisting}

When computing the validation metric, the agent passes \texttt{labels=[0, 1, 2]} (integers), creating a type mismatch:

\begin{lstlisting}[style=pythonstyle]
loss = log_loss(train_labels[val_idx], val_pred, labels=[0, 1, 2])
# Returns 0.0 regardless of predictions
\end{lstlisting}

Because \texttt{scikit-learn}'s \texttt{log\_loss} cannot match the string ground-truth labels (\texttt{'winner\_model\_a'}, etc.) with the integer label list (\texttt{[0, 1, 2]}), it silently returns 0.0 for \textit{any} set of predictions:

\begin{lstlisting}[style=pythonstyle]
>>> train_labels[:3]
array(['winner_model_b', 'winner_model_a', 'winner_model_b'], dtype=object)
>>> val_pred = [[0.3, 0.3, 0.4]] * 3  # arbitrary predictions
>>> log_loss(train_labels[:3], val_pred, labels=[0, 1, 2])
0.0
\end{lstlisting}

\paragraph{\textbf{Why this matters for search.}} This bug is particularly insidious in the context of search-based agents. A greedy or evolutionary search process that relies on self-reported validation scores would select this solution as the global optimum, halting further exploration. The solution would persist as the ``best'' candidate indefinitely, while its actual test performance would be no better than chance. This example concretely demonstrates why decoupling the search signal from agent-controlled evaluation---as in our Hidden Consistent Evaluation protocol (Section~\ref{sec:hidden_eval})---is essential for reliable long-horizon search.

\newpage
\section{MLE-bench}

\begin{table}[h!]
\caption{Per-task \percentile scores for \atlas at different time cutoffs. Values shown as score with 95\% CI below.}
\label{tab:atlas_per_task_results}
\begin{tabularx}{\textwidth}{lcccc}
\toprule
Task & 3h & 12h & 24h & 72h \\
\midrule
aptos2019-blindness-detection 
& \shortstack{\phantom{100.00}\llap{70.30}\\{\scriptsize \phantom{100.00}\llap{51.16}, \phantom{100.00}\llap{93.31}}} 
& \shortstack{\phantom{100.00}\llap{87.66}\\{\scriptsize \phantom{100.00}\llap{64.79}, \phantom{100.00}\llap{99.76}}} 
& \shortstack{\phantom{100.00}\llap{97.31}\\{\scriptsize \phantom{100.00}\llap{93.75}, \phantom{100.00}\llap{99.76}}} 
& \shortstack{\phantom{100.00}\llap{97.35}\\{\scriptsize \phantom{100.00}\llap{96.41}, \phantom{100.00}\llap{98.16}}} \\
billion-word-imputation 
& \shortstack{\phantom{100.00}\llap{66.67}\\{\scriptsize \phantom{100.00}\llap{8.05}, \phantom{100.00}\llap{100.00}}} 
& \shortstack{\phantom{100.00}\llap{100.00}\\{\scriptsize \phantom{100.00}\llap{100.00}, \phantom{100.00}\llap{100.00}}} 
& \shortstack{\phantom{100.00}\llap{100.00}\\{\scriptsize \phantom{100.00}\llap{100.00}, \phantom{100.00}\llap{100.00}}} 
& \shortstack{\phantom{100.00}\llap{100.00}\\{\scriptsize \phantom{100.00}\llap{100.00}, \phantom{100.00}\llap{100.00}}} \\
bms-molecular-translation 
& \shortstack{\phantom{100.00}\llap{19.72}\\{\scriptsize \phantom{100.00}\llap{19.45}, \phantom{100.00}\llap{20.25}}} 
& \shortstack{\phantom{100.00}\llap{28.15}\\{\scriptsize \phantom{100.00}\llap{19.45}, \phantom{100.00}\llap{34.10}}} 
& \shortstack{\phantom{100.00}\llap{57.67}\\{\scriptsize \phantom{100.00}\llap{32.95}, \phantom{100.00}\llap{80.66}}} 
& \shortstack{\phantom{100.00}\llap{69.07}\\{\scriptsize \phantom{100.00}\llap{60.64}, \phantom{100.00}\llap{85.93}}} \\
cassava-leaf-disease-classification 
& \shortstack{\phantom{100.00}\llap{58.85}\\{\scriptsize \phantom{100.00}\llap{39.49}, \phantom{100.00}\llap{88.97}}} 
& \shortstack{\phantom{100.00}\llap{87.38}\\{\scriptsize \phantom{100.00}\llap{82.90}, \phantom{100.00}\llap{90.26}}} 
& \shortstack{\phantom{100.00}\llap{84.73}\\{\scriptsize \phantom{100.00}\llap{77.59}, \phantom{100.00}\llap{90.26}}} 
& \shortstack{\phantom{100.00}\llap{89.15}\\{\scriptsize \phantom{100.00}\llap{86.36}, \phantom{100.00}\llap{92.13}}} \\
champs-scalar-coupling 
& \shortstack{\phantom{100.00}\llap{30.72}\\{\scriptsize \phantom{100.00}\llap{9.02}, \phantom{100.00}\llap{45.80}}} 
& \shortstack{\phantom{100.00}\llap{49.17}\\{\scriptsize \phantom{100.00}\llap{33.02}, \phantom{100.00}\llap{64.10}}} 
& \shortstack{\phantom{100.00}\llap{63.12}\\{\scriptsize \phantom{100.00}\llap{41.09}, \phantom{100.00}\llap{94.34}}} 
& \shortstack{\phantom{100.00}\llap{78.13}\\{\scriptsize \phantom{100.00}\llap{60.56}, \phantom{100.00}\llap{98.79}}} \\
freesound-audio-tagging-2019 
& \shortstack{\phantom{100.00}\llap{98.16}\\{\scriptsize \phantom{100.00}\llap{97.60}, \phantom{100.00}\llap{98.56}}} 
& \shortstack{\phantom{100.00}\llap{99.68}\\{\scriptsize \phantom{100.00}\llap{99.04}, \phantom{100.00}\llap{100.00}}} 
& \shortstack{\phantom{100.00}\llap{100.00}\\{\scriptsize \phantom{100.00}\llap{100.00}, \phantom{100.00}\llap{100.00}}} 
& \shortstack{\phantom{100.00}\llap{100.00}\\{\scriptsize \phantom{100.00}\llap{100.00}, \phantom{100.00}\llap{100.00}}} \\
h-and-m-personalized-fashion-recommendations 
& \shortstack{\phantom{100.00}\llap{27.62}\\{\scriptsize \phantom{100.00}\llap{27.06}, \phantom{100.00}\llap{28.65}}} 
& \shortstack{\phantom{100.00}\llap{29.03}\\{\scriptsize \phantom{100.00}\llap{28.75}, \phantom{100.00}\llap{29.53}}} 
& \shortstack{\phantom{100.00}\llap{29.21}\\{\scriptsize \phantom{100.00}\llap{28.95}, \phantom{100.00}\llap{29.53}}} 
& \shortstack{\phantom{100.00}\llap{29.60}\\{\scriptsize \phantom{100.00}\llap{29.19}, \phantom{100.00}\llap{30.00}}} \\
hms-harmful-brain-activity-classification 
& \shortstack{\phantom{100.00}\llap{30.30}\\{\scriptsize \phantom{100.00}\llap{29.50}, \phantom{100.00}\llap{31.09}}} 
& \shortstack{\phantom{100.00}\llap{30.57}\\{\scriptsize \phantom{100.00}\llap{29.25}, \phantom{100.00}\llap{31.89}}} 
& \shortstack{\phantom{100.00}\llap{30.28}\\{\scriptsize \phantom{100.00}\llap{29.25}, \phantom{100.00}\llap{31.31}}} 
& \shortstack{\phantom{100.00}\llap{31.76}\\{\scriptsize \phantom{100.00}\llap{31.63}, \phantom{100.00}\llap{31.89}}} \\
hotel-id-2021-fgvc8 
& \shortstack{\phantom{100.00}\llap{85.51}\\{\scriptsize \phantom{100.00}\llap{82.61}, \phantom{100.00}\llap{89.13}}} 
& \shortstack{\phantom{100.00}\llap{89.13}\\{\scriptsize \phantom{100.00}\llap{89.13}, \phantom{100.00}\llap{89.13}}} 
& \shortstack{\phantom{100.00}\llap{89.86}\\{\scriptsize \phantom{100.00}\llap{89.13}, \phantom{100.00}\llap{91.30}}} 
& \shortstack{\phantom{100.00}\llap{94.57}\\{\scriptsize \phantom{100.00}\llap{92.39}, \phantom{100.00}\llap{95.65}}} \\
hubmap-kidney-segmentation 
& \shortstack{\phantom{100.00}\llap{89.27}\\{\scriptsize \phantom{100.00}\llap{87.32}, \phantom{100.00}\llap{90.49}}} 
& \shortstack{\phantom{100.00}\llap{93.27}\\{\scriptsize \phantom{100.00}\llap{89.49}, \phantom{100.00}\llap{97.58}}} 
& \shortstack{\phantom{100.00}\llap{93.91}\\{\scriptsize \phantom{100.00}\llap{89.49}, \phantom{100.00}\llap{99.50}}} 
& \shortstack{\phantom{100.00}\llap{98.92}\\{\scriptsize \phantom{100.00}\llap{97.58}, \phantom{100.00}\llap{99.58}}} \\
imet-2020-fgvc7 
& \shortstack{\phantom{100.00}\llap{38.60}\\{\scriptsize \phantom{100.00}\llap{37.89}, \phantom{100.00}\llap{40.00}}} 
& \shortstack{\phantom{100.00}\llap{52.63}\\{\scriptsize \phantom{100.00}\llap{46.32}, \phantom{100.00}\llap{55.79}}} 
& \shortstack{\phantom{100.00}\llap{71.93}\\{\scriptsize \phantom{100.00}\llap{47.37}, \phantom{100.00}\llap{84.21}}} 
& \shortstack{\phantom{100.00}\llap{85.61}\\{\scriptsize \phantom{100.00}\llap{83.16}, \phantom{100.00}\llap{88.42}}} \\
jigsaw-unintended-bias-in-toxicity-classification 
& \shortstack{\phantom{100.00}\llap{8.05}\\{\scriptsize \phantom{100.00}\llap{7.86}, \phantom{100.00}\llap{8.39}}} 
& \shortstack{\phantom{100.00}\llap{8.05}\\{\scriptsize \phantom{100.00}\llap{7.90}, \phantom{100.00}\llap{8.36}}} 
& \shortstack{\phantom{100.00}\llap{8.06}\\{\scriptsize \phantom{100.00}\llap{7.90}, \phantom{100.00}\llap{8.36}}} 
& \shortstack{\phantom{100.00}\llap{8.37}\\{\scriptsize \phantom{100.00}\llap{7.94}, \phantom{100.00}\llap{8.85}}} \\
kuzushiji-recognition 
& \shortstack{\phantom{100.00}\llap{86.69}\\{\scriptsize \phantom{100.00}\llap{69.97}, \phantom{100.00}\llap{97.61}}} 
& \shortstack{\phantom{100.00}\llap{94.43}\\{\scriptsize \phantom{100.00}\llap{84.30}, \phantom{100.00}\llap{100.00}}} 
& \shortstack{\phantom{100.00}\llap{95.34}\\{\scriptsize \phantom{100.00}\llap{86.01}, \phantom{100.00}\llap{100.00}}} 
& \shortstack{\phantom{100.00}\llap{97.50}\\{\scriptsize \phantom{100.00}\llap{92.49}, \phantom{100.00}\llap{100.00}}} \\
mlsp-2013-birds 
& \shortstack{\phantom{100.00}\llap{96.67}\\{\scriptsize \phantom{100.00}\llap{95.00}, \phantom{100.00}\llap{98.75}}} 
& \shortstack{\phantom{100.00}\llap{97.92}\\{\scriptsize \phantom{100.00}\llap{96.25}, \phantom{100.00}\llap{98.75}}} 
& \shortstack{\phantom{100.00}\llap{98.33}\\{\scriptsize \phantom{100.00}\llap{97.50}, \phantom{100.00}\llap{98.75}}} 
& \shortstack{\phantom{100.00}\llap{98.75}\\{\scriptsize \phantom{100.00}\llap{97.50}, \phantom{100.00}\llap{100.00}}} \\
multi-modal-gesture-recognition 
& \shortstack{\phantom{100.00}\llap{62.96}\\{\scriptsize \phantom{100.00}\llap{38.89}, \phantom{100.00}\llap{100.00}}} 
& \shortstack{\phantom{100.00}\llap{70.37}\\{\scriptsize \phantom{100.00}\llap{44.44}, \phantom{100.00}\llap{100.00}}} 
& \shortstack{\phantom{100.00}\llap{70.37}\\{\scriptsize \phantom{100.00}\llap{44.44}, \phantom{100.00}\llap{100.00}}} 
& \shortstack{\phantom{100.00}\llap{81.48}\\{\scriptsize \phantom{100.00}\llap{55.56}, \phantom{100.00}\llap{100.00}}} \\
new-york-city-taxi-fare-prediction 
& \shortstack{\phantom{100.00}\llap{37.47}\\{\scriptsize \phantom{100.00}\llap{35.58}, \phantom{100.00}\llap{40.09}}} 
& \shortstack{\phantom{100.00}\llap{38.79}\\{\scriptsize \phantom{100.00}\llap{35.58}, \phantom{100.00}\llap{40.50}}} 
& \shortstack{\phantom{100.00}\llap{38.79}\\{\scriptsize \phantom{100.00}\llap{35.58}, \phantom{100.00}\llap{40.50}}} 
& \shortstack{\phantom{100.00}\llap{37.04}\\{\scriptsize \phantom{100.00}\llap{32.95}, \phantom{100.00}\llap{40.30}}} \\
nfl-player-contact-detection 
& \shortstack{\phantom{100.00}\llap{42.49}\\{\scriptsize \phantom{100.00}\llap{23.22}, \phantom{100.00}\llap{79.23}}} 
& \shortstack{\phantom{100.00}\llap{71.07}\\{\scriptsize \phantom{100.00}\llap{25.03}, \phantom{100.00}\llap{94.46}}} 
& \shortstack{\phantom{100.00}\llap{71.92}\\{\scriptsize \phantom{100.00}\llap{26.30}, \phantom{100.00}\llap{94.99}}} 
& \shortstack{\phantom{100.00}\llap{82.73}\\{\scriptsize \phantom{100.00}\llap{58.09}, \phantom{100.00}\llap{95.63}}} \\
nomad2018-predict-transparent-conductors 
& \shortstack{\phantom{100.00}\llap{99.66}\\{\scriptsize \phantom{100.00}\llap{99.54}, \phantom{100.00}\llap{99.89}}} 
& \shortstack{\phantom{100.00}\llap{99.58}\\{\scriptsize \phantom{100.00}\llap{99.54}, \phantom{100.00}\llap{99.66}}} 
& \shortstack{\phantom{100.00}\llap{99.66}\\{\scriptsize \phantom{100.00}\llap{99.54}, \phantom{100.00}\llap{99.89}}} 
& \shortstack{\phantom{100.00}\llap{99.77}\\{\scriptsize \phantom{100.00}\llap{99.66}, \phantom{100.00}\llap{100.00}}} \\
osic-pulmonary-fibrosis-progression 
& \shortstack{\phantom{100.00}\llap{13.85}\\{\scriptsize \phantom{100.00}\llap{12.40}, \phantom{100.00}\llap{15.70}}} 
& \shortstack{\phantom{100.00}\llap{10.89}\\{\scriptsize \phantom{100.00}\llap{10.07}, \phantom{100.00}\llap{11.50}}} 
& \shortstack{\phantom{100.00}\llap{11.02}\\{\scriptsize \phantom{100.00}\llap{10.07}, \phantom{100.00}\llap{11.88}}} 
& \shortstack{\phantom{100.00}\llap{12.50}\\{\scriptsize \phantom{100.00}\llap{11.88}, \phantom{100.00}\llap{13.22}}} \\
petfinder-pawpularity-score 
& \shortstack{\phantom{100.00}\llap{54.37}\\{\scriptsize \phantom{100.00}\llap{52.78}, \phantom{100.00}\llap{55.75}}} 
& \shortstack{\phantom{100.00}\llap{56.41}\\{\scriptsize \phantom{100.00}\llap{54.11}, \phantom{100.00}\llap{59.88}}} 
& \shortstack{\phantom{100.00}\llap{63.37}\\{\scriptsize \phantom{100.00}\llap{51.12}, \phantom{100.00}\llap{83.74}}} 
& \shortstack{\phantom{100.00}\llap{88.62}\\{\scriptsize \phantom{100.00}\llap{68.45}, \phantom{100.00}\llap{99.72}}} \\
plant-pathology-2021-fgvc8 
& \shortstack{\phantom{100.00}\llap{100.00}\\{\scriptsize \phantom{100.00}\llap{100.00}, \phantom{100.00}\llap{100.00}}} 
& \shortstack{\phantom{100.00}\llap{100.00}\\{\scriptsize \phantom{100.00}\llap{100.00}, \phantom{100.00}\llap{100.00}}} 
& \shortstack{\phantom{100.00}\llap{100.00}\\{\scriptsize \phantom{100.00}\llap{100.00}, \phantom{100.00}\llap{100.00}}} 
& \shortstack{\phantom{100.00}\llap{100.00}\\{\scriptsize \phantom{100.00}\llap{100.00}, \phantom{100.00}\llap{100.00}}} \\
smartphone-decimeter-2022 
& \shortstack{\phantom{100.00}\llap{4.89}\\{\scriptsize \phantom{100.00}\llap{4.89}, \phantom{100.00}\llap{4.89}}} 
& \shortstack{\phantom{100.00}\llap{5.00}\\{\scriptsize \phantom{100.00}\llap{4.89}, \phantom{100.00}\llap{5.06}}} 
& \shortstack{\phantom{100.00}\llap{5.00}\\{\scriptsize \phantom{100.00}\llap{4.89}, \phantom{100.00}\llap{5.06}}} 
& \shortstack{\phantom{100.00}\llap{9.02}\\{\scriptsize \phantom{100.00}\llap{4.89}, \phantom{100.00}\llap{16.75}}} \\
spooky-author-identification 
& \shortstack{\phantom{100.00}\llap{97.29}\\{\scriptsize \phantom{100.00}\llap{96.45}, \phantom{100.00}\llap{98.15}}} 
& \shortstack{\phantom{100.00}\llap{98.04}\\{\scriptsize \phantom{100.00}\llap{97.99}, \phantom{100.00}\llap{98.07}}} 
& \shortstack{\phantom{100.00}\llap{97.90}\\{\scriptsize \phantom{100.00}\llap{97.26}, \phantom{100.00}\llap{98.31}}} 
& \shortstack{\phantom{100.00}\llap{98.39}\\{\scriptsize \phantom{100.00}\llap{98.31}, \phantom{100.00}\llap{98.47}}} \\
stanford-covid-vaccine 
& \shortstack{\phantom{100.00}\llap{100.00}\\{\scriptsize \phantom{100.00}\llap{100.00}, \phantom{100.00}\llap{100.00}}} 
& \shortstack{\phantom{100.00}\llap{100.00}\\{\scriptsize \phantom{100.00}\llap{100.00}, \phantom{100.00}\llap{100.00}}} 
& \shortstack{\phantom{100.00}\llap{100.00}\\{\scriptsize \phantom{100.00}\llap{100.00}, \phantom{100.00}\llap{100.00}}} 
& \shortstack{\phantom{100.00}\llap{100.00}\\{\scriptsize \phantom{100.00}\llap{100.00}, \phantom{100.00}\llap{100.00}}} \\
tensorflow2-question-answering 
& \shortstack{\phantom{100.00}\llap{48.91}\\{\scriptsize \phantom{100.00}\llap{48.66}, \phantom{100.00}\llap{49.15}}} 
& \shortstack{\phantom{100.00}\llap{94.30}\\{\scriptsize \phantom{100.00}\llap{88.97}, \phantom{100.00}\llap{97.89}}} 
& \shortstack{\phantom{100.00}\llap{97.27}\\{\scriptsize \phantom{100.00}\llap{96.51}, \phantom{100.00}\llap{97.89}}} 
& \shortstack{\phantom{100.00}\llap{97.73}\\{\scriptsize \phantom{100.00}\llap{96.51}, \phantom{100.00}\llap{98.54}}} \\
tweet-sentiment-extraction 
& \shortstack{\phantom{100.00}\llap{73.34}\\{\scriptsize \phantom{100.00}\llap{60.39}, \phantom{100.00}\llap{83.12}}} 
& \shortstack{\phantom{100.00}\llap{96.28}\\{\scriptsize \phantom{100.00}\llap{95.95}, \phantom{100.00}\llap{96.49}}} 
& \shortstack{\phantom{100.00}\llap{95.92}\\{\scriptsize \phantom{100.00}\llap{93.57}, \phantom{100.00}\llap{98.25}}} 
& \shortstack{\phantom{100.00}\llap{93.73}\\{\scriptsize \phantom{100.00}\llap{85.86}, \phantom{100.00}\llap{98.16}}} \\
us-patent-phrase-to-phrase-matching 
& \shortstack{\phantom{100.00}\llap{96.52}\\{\scriptsize \phantom{100.00}\llap{91.53}, \phantom{100.00}\llap{99.15}}} 
& \shortstack{\phantom{100.00}\llap{99.51}\\{\scriptsize \phantom{100.00}\llap{99.21}, \phantom{100.00}\llap{99.89}}} 
& \shortstack{\phantom{100.00}\llap{99.63}\\{\scriptsize \phantom{100.00}\llap{99.42}, \phantom{100.00}\llap{100.00}}} 
& \shortstack{\phantom{100.00}\llap{99.91}\\{\scriptsize \phantom{100.00}\llap{99.79}, \phantom{100.00}\llap{100.00}}} \\
uw-madison-gi-tract-image-segmentation 
& \shortstack{\phantom{100.00}\llap{18.06}\\{\scriptsize \phantom{100.00}\llap{4.91}, \phantom{100.00}\llap{43.83}}} 
& \shortstack{\phantom{100.00}\llap{22.54}\\{\scriptsize \phantom{100.00}\llap{5.30}, \phantom{100.00}\llap{56.43}}} 
& \shortstack{\phantom{100.00}\llap{26.18}\\{\scriptsize \phantom{100.00}\llap{5.04}, \phantom{100.00}\llap{67.61}}} 
& \shortstack{\phantom{100.00}\llap{33.35}\\{\scriptsize \phantom{100.00}\llap{5.04}, \phantom{100.00}\llap{89.08}}} \\
ventilator-pressure-prediction 
& \shortstack{\phantom{100.00}\llap{48.78}\\{\scriptsize \phantom{100.00}\llap{47.96}, \phantom{100.00}\llap{49.39}}} 
& \shortstack{\phantom{100.00}\llap{55.40}\\{\scriptsize \phantom{100.00}\llap{50.02}, \phantom{100.00}\llap{63.52}}} 
& \shortstack{\phantom{100.00}\llap{55.93}\\{\scriptsize \phantom{100.00}\llap{50.38}, \phantom{100.00}\llap{63.52}}} 
& \shortstack{\phantom{100.00}\llap{61.78}\\{\scriptsize \phantom{100.00}\llap{60.83}, \phantom{100.00}\llap{63.52}}} \\
whale-categorization-playground 
& \shortstack{\phantom{100.00}\llap{92.11}\\{\scriptsize \phantom{100.00}\llap{86.17}, \phantom{100.00}\llap{96.59}}} 
& \shortstack{\phantom{100.00}\llap{98.55}\\{\scriptsize \phantom{100.00}\llap{97.35}, \phantom{100.00}\llap{99.24}}} 
& \shortstack{\phantom{100.00}\llap{98.55}\\{\scriptsize \phantom{100.00}\llap{97.35}, \phantom{100.00}\llap{99.24}}} 
& \shortstack{\phantom{100.00}\llap{99.31}\\{\scriptsize \phantom{100.00}\llap{98.30}, \phantom{100.00}\llap{99.81}}} \\
\bottomrule
\end{tabularx}
\end{table}

\begin{table}[H]
\caption{Per-task test percentile scores for \atlasplus at different time cutoffs. Values shown as score with 95\% CI below.}
\label{tab:atlasplus_per_task_results}
\begin{tabularx}{\textwidth}{lcccc}
\toprule
Task & 3h & 12h & 24h & 72h \\
\midrule
aptos2019-blindness-detection 
& \shortstack{\phantom{100.00}\llap{92.59}\\{\scriptsize \phantom{100.00}\llap{83.67}, \phantom{100.00}\llap{98.22}}} 
& \shortstack{\phantom{100.00}\llap{95.75}\\{\scriptsize \phantom{100.00}\llap{88.70}, \phantom{100.00}\llap{99.39}}} 
& \shortstack{\phantom{100.00}\llap{95.75}\\{\scriptsize \phantom{100.00}\llap{88.70}, \phantom{100.00}\llap{99.39}}} 
& \shortstack{\phantom{100.00}\llap{96.60}\\{\scriptsize \phantom{100.00}\llap{93.03}, \phantom{100.00}\llap{99.25}}} \\
billion-word-imputation 
& \shortstack{\phantom{100.00}\llap{97.70}\\{\scriptsize \phantom{100.00}\llap{93.10}, \phantom{100.00}\llap{100.00}}} 
& \shortstack{\phantom{100.00}\llap{100.00}\\{\scriptsize \phantom{100.00}\llap{100.00}, \phantom{100.00}\llap{100.00}}} 
& \shortstack{\phantom{100.00}\llap{100.00}\\{\scriptsize \phantom{100.00}\llap{100.00}, \phantom{100.00}\llap{100.00}}} 
& \shortstack{\phantom{100.00}\llap{100.00}\\{\scriptsize \phantom{100.00}\llap{100.00}, \phantom{100.00}\llap{100.00}}} \\
bms-molecular-translation 
& \shortstack{\phantom{100.00}\llap{16.40}\\{\scriptsize \phantom{100.00}\llap{0.00}, \phantom{100.00}\llap{49.20}}} 
& \shortstack{\phantom{100.00}\llap{80.97}\\{\scriptsize \phantom{100.00}\llap{73.68}, \phantom{100.00}\llap{85.13}}} 
& \shortstack{\phantom{100.00}\llap{86.08}\\{\scriptsize \phantom{100.00}\llap{84.44}, \phantom{100.00}\llap{87.41}}} 
& \shortstack{\phantom{100.00}\llap{93.63}\\{\scriptsize \phantom{100.00}\llap{92.56}, \phantom{100.00}\llap{95.08}}} \\
cassava-leaf-disease-classification 
& \shortstack{\phantom{100.00}\llap{87.98}\\{\scriptsize \phantom{100.00}\llap{82.90}, \phantom{100.00}\llap{94.69}}} 
& \shortstack{\phantom{100.00}\llap{88.40}\\{\scriptsize \phantom{100.00}\llap{82.90}, \phantom{100.00}\llap{95.95}}} 
& \shortstack{\phantom{100.00}\llap{96.60}\\{\scriptsize \phantom{100.00}\llap{90.26}, \phantom{100.00}\llap{99.95}}} 
& \shortstack{\phantom{100.00}\llap{99.06}\\{\scriptsize \phantom{100.00}\llap{97.26}, \phantom{100.00}\llap{99.97}}} \\
champs-scalar-coupling 
& \shortstack{\phantom{100.00}\llap{95.05}\\{\scriptsize \phantom{100.00}\llap{92.70}, \phantom{100.00}\llap{96.38}}} 
& \shortstack{\phantom{100.00}\llap{99.06}\\{\scriptsize \phantom{100.00}\llap{98.69}, \phantom{100.00}\llap{99.31}}} 
& \shortstack{\phantom{100.00}\llap{99.27}\\{\scriptsize \phantom{100.00}\llap{99.01}, \phantom{100.00}\llap{99.45}}} 
& \shortstack{\phantom{100.00}\llap{99.53}\\{\scriptsize \phantom{100.00}\llap{99.49}, \phantom{100.00}\llap{99.56}}} \\
freesound-audio-tagging-2019 
& \shortstack{\phantom{100.00}\llap{95.84}\\{\scriptsize \phantom{100.00}\llap{93.05}, \phantom{100.00}\llap{97.84}}} 
& \shortstack{\phantom{100.00}\llap{99.36}\\{\scriptsize \phantom{100.00}\llap{98.08}, \phantom{100.00}\llap{100.00}}} 
& \shortstack{\phantom{100.00}\llap{100.00}\\{\scriptsize \phantom{100.00}\llap{100.00}, \phantom{100.00}\llap{100.00}}} 
& \shortstack{\phantom{100.00}\llap{100.00}\\{\scriptsize \phantom{100.00}\llap{100.00}, \phantom{100.00}\llap{100.00}}} \\
h-and-m-personalized-fashion-recommendations 
& \shortstack{\phantom{100.00}\llap{29.03}\\{\scriptsize \phantom{100.00}\llap{28.95}, \phantom{100.00}\llap{29.19}}} 
& \shortstack{\phantom{100.00}\llap{30.82}\\{\scriptsize \phantom{100.00}\llap{29.22}, \phantom{100.00}\llap{31.83}}} 
& \shortstack{\phantom{100.00}\llap{33.53}\\{\scriptsize \phantom{100.00}\llap{29.46}, \phantom{100.00}\llap{36.40}}} 
& \shortstack{\phantom{100.00}\llap{34.07}\\{\scriptsize \phantom{100.00}\llap{29.22}, \phantom{100.00}\llap{36.57}}} \\
hms-harmful-brain-activity-classification 
& \shortstack{\phantom{100.00}\llap{24.21}\\{\scriptsize \phantom{100.00}\llap{21.04}, \phantom{100.00}\llap{26.32}}} 
& \shortstack{\phantom{100.00}\llap{27.90}\\{\scriptsize \phantom{100.00}\llap{24.37}, \phantom{100.00}\llap{31.31}}} 
& \shortstack{\phantom{100.00}\llap{28.48}\\{\scriptsize \phantom{100.00}\llap{26.68}, \phantom{100.00}\llap{30.73}}} 
& \shortstack{\phantom{100.00}\llap{28.21}\\{\scriptsize \phantom{100.00}\llap{26.46}, \phantom{100.00}\llap{31.49}}} \\
hotel-id-2021-fgvc8 
& \shortstack{\phantom{100.00}\llap{91.30}\\{\scriptsize \phantom{100.00}\llap{90.22}, \phantom{100.00}\llap{92.39}}} 
& \shortstack{\phantom{100.00}\llap{95.29}\\{\scriptsize \phantom{100.00}\llap{93.48}, \phantom{100.00}\llap{96.74}}} 
& \shortstack{\phantom{100.00}\llap{97.10}\\{\scriptsize \phantom{100.00}\llap{96.74}, \phantom{100.00}\llap{97.83}}} 
& \shortstack{\phantom{100.00}\llap{98.55}\\{\scriptsize \phantom{100.00}\llap{96.74}, \phantom{100.00}\llap{100.00}}} \\
hubmap-kidney-segmentation 
& \shortstack{\phantom{100.00}\llap{89.27}\\{\scriptsize \phantom{100.00}\llap{85.15}, \phantom{100.00}\llap{92.16}}} 
& \shortstack{\phantom{100.00}\llap{95.25}\\{\scriptsize \phantom{100.00}\llap{89.41}, \phantom{100.00}\llap{98.42}}} 
& \shortstack{\phantom{100.00}\llap{96.19}\\{\scriptsize \phantom{100.00}\llap{92.58}, \phantom{100.00}\llap{99.67}}} 
& \shortstack{\phantom{100.00}\llap{98.25}\\{\scriptsize \phantom{100.00}\llap{97.50}, \phantom{100.00}\llap{99.67}}} \\
imet-2020-fgvc7 
& \shortstack{\phantom{100.00}\llap{64.21}\\{\scriptsize \phantom{100.00}\llap{53.68}, \phantom{100.00}\llap{83.16}}} 
& \shortstack{\phantom{100.00}\llap{85.26}\\{\scriptsize \phantom{100.00}\llap{85.26}, \phantom{100.00}\llap{85.26}}} 
& \shortstack{\phantom{100.00}\llap{87.72}\\{\scriptsize \phantom{100.00}\llap{85.26}, \phantom{100.00}\llap{90.53}}} 
& \shortstack{\phantom{100.00}\llap{92.63}\\{\scriptsize \phantom{100.00}\llap{91.58}, \phantom{100.00}\llap{94.74}}} \\
jigsaw-unintended-bias-in-toxicity-classification 
& \shortstack{\phantom{100.00}\llap{7.81}\\{\scriptsize \phantom{100.00}\llap{7.75}, \phantom{100.00}\llap{7.90}}} 
& \shortstack{\phantom{100.00}\llap{8.20}\\{\scriptsize \phantom{100.00}\llap{7.90}, \phantom{100.00}\llap{8.70}}} 
& \shortstack{\phantom{100.00}\llap{8.34}\\{\scriptsize \phantom{100.00}\llap{7.98}, \phantom{100.00}\llap{8.70}}} 
& \shortstack{\phantom{100.00}\llap{8.66}\\{\scriptsize \phantom{100.00}\llap{8.36}, \phantom{100.00}\llap{9.23}}} \\
kuzushiji-recognition 
& \shortstack{\phantom{100.00}\llap{99.54}\\{\scriptsize \phantom{100.00}\llap{98.63}, \phantom{100.00}\llap{100.00}}} 
& \shortstack{\phantom{100.00}\llap{100.00}\\{\scriptsize \phantom{100.00}\llap{100.00}, \phantom{100.00}\llap{100.00}}} 
& \shortstack{\phantom{100.00}\llap{100.00}\\{\scriptsize \phantom{100.00}\llap{100.00}, \phantom{100.00}\llap{100.00}}} 
& \shortstack{\phantom{100.00}\llap{100.00}\\{\scriptsize \phantom{100.00}\llap{100.00}, \phantom{100.00}\llap{100.00}}} \\
mlsp-2013-birds 
& \shortstack{\phantom{100.00}\llap{95.00}\\{\scriptsize \phantom{100.00}\llap{90.00}, \phantom{100.00}\llap{98.75}}} 
& \shortstack{\phantom{100.00}\llap{96.67}\\{\scriptsize \phantom{100.00}\llap{93.75}, \phantom{100.00}\llap{98.75}}} 
& \shortstack{\phantom{100.00}\llap{99.17}\\{\scriptsize \phantom{100.00}\llap{98.75}, \phantom{100.00}\llap{100.00}}} 
& \shortstack{\phantom{100.00}\llap{99.17}\\{\scriptsize \phantom{100.00}\llap{98.75}, \phantom{100.00}\llap{100.00}}} \\
multi-modal-gesture-recognition 
& \shortstack{\phantom{100.00}\llap{100.00}\\{\scriptsize \phantom{100.00}\llap{100.00}, \phantom{100.00}\llap{100.00}}} 
& \shortstack{\phantom{100.00}\llap{100.00}\\{\scriptsize \phantom{100.00}\llap{100.00}, \phantom{100.00}\llap{100.00}}} 
& \shortstack{\phantom{100.00}\llap{100.00}\\{\scriptsize \phantom{100.00}\llap{100.00}, \phantom{100.00}\llap{100.00}}} 
& \shortstack{\phantom{100.00}\llap{100.00}\\{\scriptsize \phantom{100.00}\llap{100.00}, \phantom{100.00}\llap{100.00}}} \\
new-york-city-taxi-fare-prediction 
& \shortstack{\phantom{100.00}\llap{39.71}\\{\scriptsize \phantom{100.00}\llap{36.05}, \phantom{100.00}\llap{42.45}}} 
& \shortstack{\phantom{100.00}\llap{38.54}\\{\scriptsize \phantom{100.00}\llap{36.05}, \phantom{100.00}\llap{41.78}}} 
& \shortstack{\phantom{100.00}\llap{36.97}\\{\scriptsize \phantom{100.00}\llap{36.05}, \phantom{100.00}\llap{37.80}}} 
& \shortstack{\phantom{100.00}\llap{38.63}\\{\scriptsize \phantom{100.00}\llap{36.66}, \phantom{100.00}\llap{40.90}}} \\
nfl-player-contact-detection 
& \shortstack{\phantom{100.00}\llap{92.08}\\{\scriptsize \phantom{100.00}\llap{88.29}, \phantom{100.00}\llap{94.46}}} 
& \shortstack{\phantom{100.00}\llap{95.03}\\{\scriptsize \phantom{100.00}\llap{94.14}, \phantom{100.00}\llap{95.95}}} 
& \shortstack{\phantom{100.00}\llap{95.56}\\{\scriptsize \phantom{100.00}\llap{93.82}, \phantom{100.00}\llap{96.81}}} 
& \shortstack{\phantom{100.00}\llap{96.77}\\{\scriptsize \phantom{100.00}\llap{96.06}, \phantom{100.00}\llap{97.12}}} \\
nomad2018-predict-transparent-conductors 
& \shortstack{\phantom{100.00}\llap{100.00}\\{\scriptsize \phantom{100.00}\llap{100.00}, \phantom{100.00}\llap{100.00}}} 
& \shortstack{\phantom{100.00}\llap{100.00}\\{\scriptsize \phantom{100.00}\llap{100.00}, \phantom{100.00}\llap{100.00}}} 
& \shortstack{\phantom{100.00}\llap{100.00}\\{\scriptsize \phantom{100.00}\llap{100.00}, \phantom{100.00}\llap{100.00}}} 
& \shortstack{\phantom{100.00}\llap{100.00}\\{\scriptsize \phantom{100.00}\llap{100.00}, \phantom{100.00}\llap{100.00}}} \\
osic-pulmonary-fibrosis-progression 
& \shortstack{\phantom{100.00}\llap{12.56}\\{\scriptsize \phantom{100.00}\llap{12.31}, \phantom{100.00}\llap{12.88}}} 
& \shortstack{\phantom{100.00}\llap{12.53}\\{\scriptsize \phantom{100.00}\llap{11.12}, \phantom{100.00}\llap{14.17}}} 
& \shortstack{\phantom{100.00}\llap{12.02}\\{\scriptsize \phantom{100.00}\llap{11.12}, \phantom{100.00}\llap{13.07}}} 
& \shortstack{\phantom{100.00}\llap{12.34}\\{\scriptsize \phantom{100.00}\llap{12.17}, \phantom{100.00}\llap{12.45}}} \\
petfinder-pawpularity-score 
& \shortstack{\phantom{100.00}\llap{75.26}\\{\scriptsize \phantom{100.00}\llap{58.75}, \phantom{100.00}\llap{99.46}}} 
& \shortstack{\phantom{100.00}\llap{94.19}\\{\scriptsize \phantom{100.00}\llap{93.13}, \phantom{100.00}\llap{96.21}}} 
& \shortstack{\phantom{100.00}\llap{98.13}\\{\scriptsize \phantom{100.00}\llap{96.10}, \phantom{100.00}\llap{100.00}}} 
& \shortstack{\phantom{100.00}\llap{98.70}\\{\scriptsize \phantom{100.00}\llap{96.10}, \phantom{100.00}\llap{100.00}}} \\
plant-pathology-2021-fgvc8 
& \shortstack{\phantom{100.00}\llap{100.00}\\{\scriptsize \phantom{100.00}\llap{100.00}, \phantom{100.00}\llap{100.00}}} 
& \shortstack{\phantom{100.00}\llap{100.00}\\{\scriptsize \phantom{100.00}\llap{100.00}, \phantom{100.00}\llap{100.00}}} 
& \shortstack{\phantom{100.00}\llap{100.00}\\{\scriptsize \phantom{100.00}\llap{100.00}, \phantom{100.00}\llap{100.00}}} 
& \shortstack{\phantom{100.00}\llap{100.00}\\{\scriptsize \phantom{100.00}\llap{100.00}, \phantom{100.00}\llap{100.00}}} \\
smartphone-decimeter-2022 
& \shortstack{\phantom{100.00}\llap{8.96}\\{\scriptsize \phantom{100.00}\llap{4.89}, \phantom{100.00}\llap{16.75}}} 
& \shortstack{\phantom{100.00}\llap{40.78}\\{\scriptsize \phantom{100.00}\llap{4.89}, \phantom{100.00}\llap{100.00}}} 
& \shortstack{\phantom{100.00}\llap{36.77}\\{\scriptsize \phantom{100.00}\llap{4.89}, \phantom{100.00}\llap{100.00}}} 
& \shortstack{\phantom{100.00}\llap{36.59}\\{\scriptsize \phantom{100.00}\llap{4.89}, \phantom{100.00}\llap{100.00}}} \\
spooky-author-identification 
& \shortstack{\phantom{100.00}\llap{98.25}\\{\scriptsize \phantom{100.00}\llap{98.15}, \phantom{100.00}\llap{98.31}}} 
& \shortstack{\phantom{100.00}\llap{98.74}\\{\scriptsize \phantom{100.00}\llap{98.55}, \phantom{100.00}\llap{98.95}}} 
& \shortstack{\phantom{100.00}\llap{98.93}\\{\scriptsize \phantom{100.00}\llap{98.87}, \phantom{100.00}\llap{98.95}}} 
& \shortstack{\phantom{100.00}\llap{99.03}\\{\scriptsize \phantom{100.00}\llap{98.95}, \phantom{100.00}\llap{99.11}}} \\
stanford-covid-vaccine 
& \shortstack{\phantom{100.00}\llap{100.00}\\{\scriptsize \phantom{100.00}\llap{100.00}, \phantom{100.00}\llap{100.00}}} 
& \shortstack{\phantom{100.00}\llap{100.00}\\{\scriptsize \phantom{100.00}\llap{100.00}, \phantom{100.00}\llap{100.00}}} 
& \shortstack{\phantom{100.00}\llap{100.00}\\{\scriptsize \phantom{100.00}\llap{100.00}, \phantom{100.00}\llap{100.00}}} 
& \shortstack{\phantom{100.00}\llap{100.00}\\{\scriptsize \phantom{100.00}\llap{100.00}, \phantom{100.00}\llap{100.00}}} \\
tensorflow2-question-answering 
& \shortstack{\phantom{100.00}\llap{48.91}\\{\scriptsize \phantom{100.00}\llap{48.66}, \phantom{100.00}\llap{49.07}}} 
& \shortstack{\phantom{100.00}\llap{80.35}\\{\scriptsize \phantom{100.00}\llap{49.47}, \phantom{100.00}\llap{97.40}}} 
& \shortstack{\phantom{100.00}\llap{95.59}\\{\scriptsize \phantom{100.00}\llap{92.62}, \phantom{100.00}\llap{97.97}}} 
& \shortstack{\phantom{100.00}\llap{98.54}\\{\scriptsize \phantom{100.00}\llap{97.89}, \phantom{100.00}\llap{99.43}}} \\
tweet-sentiment-extraction 
& \shortstack{\phantom{100.00}\llap{88.13}\\{\scriptsize \phantom{100.00}\llap{79.00}, \phantom{100.00}\llap{96.31}}} 
& \shortstack{\phantom{100.00}\llap{97.93}\\{\scriptsize \phantom{100.00}\llap{96.90}, \phantom{100.00}\llap{98.52}}} 
& \shortstack{\phantom{100.00}\llap{98.53}\\{\scriptsize \phantom{100.00}\llap{98.20}, \phantom{100.00}\llap{99.10}}} 
& \shortstack{\phantom{100.00}\llap{98.25}\\{\scriptsize \phantom{100.00}\llap{97.44}, \phantom{100.00}\llap{99.10}}} \\
us-patent-phrase-to-phrase-matching 
& \shortstack{\phantom{100.00}\llap{99.51}\\{\scriptsize \phantom{100.00}\llap{99.42}, \phantom{100.00}\llap{99.68}}} 
& \shortstack{\phantom{100.00}\llap{99.93}\\{\scriptsize \phantom{100.00}\llap{99.84}, \phantom{100.00}\llap{100.00}}} 
& \shortstack{\phantom{100.00}\llap{100.00}\\{\scriptsize \phantom{100.00}\llap{100.00}, \phantom{100.00}\llap{100.00}}} 
& \shortstack{\phantom{100.00}\llap{100.00}\\{\scriptsize \phantom{100.00}\llap{100.00}, \phantom{100.00}\llap{100.00}}} \\
uw-madison-gi-tract-image-segmentation 
& \shortstack{\phantom{100.00}\llap{61.39}\\{\scriptsize \phantom{100.00}\llap{57.27}, \phantom{100.00}\llap{68.07}}} 
& \shortstack{\phantom{100.00}\llap{81.02}\\{\scriptsize \phantom{100.00}\llap{59.21}, \phantom{100.00}\llap{93.21}}} 
& \shortstack{\phantom{100.00}\llap{87.90}\\{\scriptsize \phantom{100.00}\llap{80.38}, \phantom{100.00}\llap{93.21}}} 
& \shortstack{\phantom{100.00}\llap{93.95}\\{\scriptsize \phantom{100.00}\llap{89.40}, \phantom{100.00}\llap{97.29}}} \\
ventilator-pressure-prediction 
& \shortstack{\phantom{100.00}\llap{46.17}\\{\scriptsize \phantom{100.00}\llap{44.62}, \phantom{100.00}\llap{47.20}}} 
& \shortstack{\phantom{100.00}\llap{53.94}\\{\scriptsize \phantom{100.00}\llap{52.34}, \phantom{100.00}\llap{55.76}}} 
& \shortstack{\phantom{100.00}\llap{58.74}\\{\scriptsize \phantom{100.00}\llap{55.57}, \phantom{100.00}\llap{61.18}}} 
& \shortstack{\phantom{100.00}\llap{70.06}\\{\scriptsize \phantom{100.00}\llap{60.48}, \phantom{100.00}\llap{88.10}}} \\
whale-categorization-playground 
& \shortstack{\phantom{100.00}\llap{94.63}\\{\scriptsize \phantom{100.00}\llap{91.86}, \phantom{100.00}\llap{96.59}}} 
& \shortstack{\phantom{100.00}\llap{96.84}\\{\scriptsize \phantom{100.00}\llap{96.21}, \phantom{100.00}\llap{97.35}}} 
& \shortstack{\phantom{100.00}\llap{98.36}\\{\scriptsize \phantom{100.00}\llap{97.35}, \phantom{100.00}\llap{99.43}}} 
& \shortstack{\phantom{100.00}\llap{99.56}\\{\scriptsize \phantom{100.00}\llap{99.43}, \phantom{100.00}\llap{99.81}}} \\
\bottomrule
\end{tabularx}
\end{table}

\newpage
\section{AIRS-Bench}
\label{airsbench-appendix}

\begin{table}[H]
\centering
\caption{Per-task results on AIRS-Bench. Each task group shows \atlas followed by the two best-performing baselines (selected by best seed). Normalised scores are scaled so that 0\,=\,worst and 100\,=\,SOTA. $\infty$ denotes normalised scores whereby 1 seed achieved the optimal score. This is usually due to unfair solutions. See Section \ref{sec:airs_bench}. We show the best baseline per task selected by the best seed score since we care about SOTA beating solutions. \textbf{Bold} indicates the best method per task (by mean normalised score).}
\label{tab:airs_results}
\small
\begin{tabularx}{\textwidth}{l l r r r r r r r}
\toprule
\textbf{Task} & \textbf{Method} & \textbf{SOTA} & \textbf{Best} & \textbf{Mean} & \textbf{\shortstack{Norm. \\Best}} & \textbf{\shortstack{Norm. \\Mean}} & \textbf{Seeds} & \textbf{\shortstack{Beats\\SOTA}} \\
\midrule
Code Gen. & \textbf{\atlas} & 0.1870 & \textbf{0.2810} & \textbf{0.2309} & \textbf{159.4} & \textbf{127.3} & \textbf{3} & \textbf{3} \\
 & $AIRA_{\text{greedy}}$ GPT5 &  & 0.0050 & 0.0020 & 2.4 & 1.0 & 10 & 0 \\
\addlinespace[4pt]
Code Retrieval & \textbf{\atlas} & 0.6113 & \textbf{0.5772} & \textbf{0.5037} & \textbf{91.1} & \textbf{74.8} & \textbf{3} & \textbf{0} \\
 & $AIRA_{\text{greedy}}$ GPT5 &  & 0.4485 & 0.1968 & 63.0 & 23.2 & 10 & 0 \\
\addlinespace[4pt]
Coref. (WSC) & \textbf{\atlas} & 0.9620 & \textbf{1.0000} & \textbf{0.9952} & $\infty$ & $\infty$ & \textbf{4} & \textbf{4} \\
 & $AIRA_{\text{greedy}}$ GPT5 &  & 0.9423 & 0.7577 & 85.2 & 34.2 & 10 & 0 \\
\addlinespace[4pt]
Coref. (Wino.) & \textbf{\atlas} & 0.8540 & \textbf{0.9132} & \textbf{0.9092} & \textbf{130.4} & \textbf{127.9} & \textbf{4} & \textbf{4} \\
 & $AIRA_{\text{greedy}}$ GPT5 &  & 0.9148 & 0.6938 & 131.5 & 56.6 & 10 & 4 \\
\addlinespace[4pt]
Mol. Prop. ($C_v$) & \textbf{\atlas} & 0.0210 & \textbf{0.0193} & \textbf{0.0199} & \textbf{100.8} & \textbf{100.5} & \textbf{4} & \textbf{4} \\
 & $AIRA_{\text{greedy}}$ GPT5 &  & 0.0306 & 0.0557 & 96.4 & 90.8 & 10 & 0 \\
\addlinespace[4pt]
Mol. Prop. ($G$) & \textbf{\atlas} & 7.5300 & \textbf{5.5464} & \textbf{5.6387} & \textbf{101.1} & \textbf{101.0} & \textbf{4} & \textbf{4} \\
 & $AIRA_{\text{greedy}}$ o3 &  & 17.6286 & $3.39 \times 10^{4}$ & 97.0 & 70.0 & 10 & 0 \\
\addlinespace[4pt]
Graph Reg. & \textbf{\atlas} & 0.0170 & \textbf{0.0226} & \textbf{0.0227} & \textbf{93.9} & \textbf{93.7} & \textbf{4} & \textbf{0} \\
 & $AIRA_{\text{greedy}}$ o3 &  & 0.0408 & 0.0984 & 81.1 & 62.0 & 10 & 0 \\
\addlinespace[4pt]
Math QA & \textbf{\atlas} & 0.9420 & \textbf{0.9367} & \textbf{0.9267} & \textbf{96.9} & \textbf{92.2} & \textbf{4} & \textbf{0} \\
 & $AIRA_{\text{greedy}}$ o3 &  & 0.6500 & 0.4253 & 36.9 & 19.5 & 10 & 0 \\
\addlinespace[4pt]
QA (DuoRC) & \textbf{\atlas} & 0.4648 & \textbf{0.3925} & \textbf{0.3842} & \textbf{79.7} & \textbf{77.6} & \textbf{4} & \textbf{0} \\
 & $AIRA_{\text{oneshot}}$ OSS-120B &  & 0.3262 & 0.2782 & 63.2 & 52.2 & 20 & 0 \\
\addlinespace[4pt]
QA (ELI5) & \textbf{\atlas} & 0.2690 & \textbf{0.2523} & \textbf{0.2483} & \textbf{92.7} & \textbf{91.0} & \textbf{4} & \textbf{0} \\
 & $AIRA_{\text{greedy}}$ GPT5 &  & 0.2354 & 0.1881 & 85.6 & 66.2 & 10 & 0 \\
\addlinespace[4pt]
QA (FinQA) & \textbf{\atlas} & 0.7803 & \textbf{1.0000} & \textbf{0.6713} & $\infty$ & $\infty$ & \textbf{4} & \textbf{1} \\
 & $AIRA_{\text{greedy}}$ GPT5 &  & 0.0924 & 0.0340 & 6.4 & 2.3 & 10 & 0 \\
\addlinespace[4pt]
Mol. Prop. ($R^2$) & \textbf{\atlas} & 0.0330 & \textbf{0.0221} & \textbf{0.0241} & \textbf{103.6} & \textbf{102.8} & \textbf{4} & \textbf{4} \\
 & $AIRA_{\text{greedy}}$ GPT5 &  & 0.1550 & 0.5619 & 86.2 & 74.7 & 10 & 0 \\
\addlinespace[4pt]
Reading Comp. & \textbf{\atlas} & 0.8580 & \textbf{0.8437} & \textbf{0.8316} & \textbf{95.1} & \textbf{91.4} & \textbf{4} & \textbf{0} \\
 & $AIRA_{\text{greedy}}$ GPT5 &  & 0.8645 & 0.5187 & 102.4 & 37.5 & 10 & 2 \\
\addlinespace[4pt]
Sentiment & \textbf{\atlas} & 0.7780 & \textbf{0.7311} & \textbf{0.7304} & \textbf{85.1} & \textbf{84.9} & \textbf{4} & \textbf{0} \\
 & $AIRA_{\text{greedy}}$ GPT5 &  & 0.7165 & 0.6671 & 80.9 & 68.4 & 10 & 0 \\
\addlinespace[4pt]
Text Cls. & \textbf{\atlas} & 0.9050 & \textbf{0.9333} & \textbf{0.9298} & \textbf{116.1} & \textbf{113.8} & \textbf{4} & \textbf{4} \\
 & $AIRA_{\text{greedy}}$ o3 &  & 0.9327 & 0.9012 & 115.7 & 98.2 & 10 & 9 \\
\addlinespace[4pt]
Text Sim. & \textbf{\atlas} & 0.8540 & \textbf{0.9083} & \textbf{0.9070} & \textbf{124.1} & \textbf{123.3} & \textbf{4} & \textbf{4} \\
 & $AIRA_{\text{greedy}}$ GPT5 &  & 0.9078 & 0.8737 & 123.8 & 107.5 & 10 & 8 \\
\addlinespace[4pt]
TS (Traffic) & \textbf{\atlas} & 0.6220 & $4.72 \times 10^{11}$ & $4.72 \times 10^{11}$ & \textbf{18.8} & \textbf{12.5} & \textbf{3} & \textbf{0} \\
 & $AIRA_{\text{greedy}}$ OSS-120B &  & $4.28 \times 10^{11}$ & $5.82 \times 10^{12}$ & 19.1 & 11.4 & 10 & 0 \\
\addlinespace[4pt]
TS (Rideshare) & \textbf{\atlas} & 1.1850 & \textbf{1.0661} & \textbf{1.1528} & \textbf{105.6} & \textbf{101.5} & \textbf{3} & \textbf{1} \\
 & $AIRA_{\text{greedy}}$ OSS-20B &  & 1.1741 & 1.2884 & 100.5 & 95.6 & 10 & 2 \\
\addlinespace[4pt]
TS (Solar) & \textbf{\atlas} & 576.35 & $8.06 \times 10^{2}$ & $1.06 \times 10^{3}$ & \textbf{83.7} & \textbf{72.0} & \textbf{3} & \textbf{0} \\
 & $AIRA_{\text{greedy}}$ GPT5 &  & $7.49 \times 10^{2}$ & $1.29 \times 10^{3}$ & 87.3 & 61.1 & 10 & 0 \\
\addlinespace[4pt]
Mol. Prop. ($U_0$) & \textbf{\atlas} & 5.83 & \textbf{4.4481} & \textbf{6.8336} & \textbf{101.0} & \textbf{99.9} & \textbf{4} & \textbf{3} \\
 & $AIRA_{\text{greedy}}$ GPT5 &  & 11.9779 & 77.3832 & 97.5 & 90.9 & 10 & 0 \\
\bottomrule
\end{tabularx}
\end{table}

\newpage

\section{Compute Optimal Resource Allocation}
\label{appendix:optimal_allocation}

Since $P = 100 \cdot g/(g+1)$ is strictly monotonically increasing in~$g$, and $g = \alpha \cdot h$ with $\alpha > 0$, maximizing $P$ over $(N, t)$ subject to a fixed compute budget $C = N \cdot t$ reduces to maximizing
\begin{equation}
    h(N, t) = \log(\gamma \, t + 1) \cdot \log(\beta \, N + 1).
\end{equation}

\begin{theorem}
\label{thm:optimal_allocation}
Given the compute budget constraint $C = N \cdot t$ with $C, \gamma, \beta > 0$, the function $h(N, t) = \log(\gamma \, t + 1) \cdot \log(\beta \, N + 1)$ achieves its unique global maximum at:
\begin{equation}
    N^* = \sqrt{\frac{\gamma C}{\beta}}, \qquad t^* = \sqrt{\frac{\beta C}{\gamma}}.
\end{equation}
\end{theorem}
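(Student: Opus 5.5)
The plan is to reduce the constrained problem to a one-variable problem that is symmetric, and then show that the objective is strictly log-concave in a logarithmic coordinate. Symmetry then pins the maximizer to the midpoint, and strict concavity makes it unique.

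\textbf{Step 1: rescale.} Set $x = \beta N$ and $y = \gamma t$. Since $N, t > 0$, the constraint $N t = C$ becomes $x y = K$ with $K := \beta\gamma C > 0$. The objective becomes $h = \log(1+x)\,\log(1+K/x)$ for $x > 0$. Both factors are strictly positive because their arguments exceed $1$, so $h > 0$ on the whole feasible set and we may take logarithms.

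\textbf{Step 2: change variable and use symmetry.} Write $x = e^{s}$ with $s \in \mathbb{R}$ and $L := \log K$. Define $\psi(s) := \log\log(1+e^{s})$. Then
\begin{equation}
\log h = \psi(s) + \psi(L - s),
\end{equation}
which is invariant under the reflection $s \mapsto L - s$. If $\psi$ is strictly concave, then $\log h$ is a strictly concave function of $s$ on $\mathbb{R}$ that is symmetric about $L/2$. Such a function has its unique global maximizer at $s = L/2$: symmetry makes the derivative vanish there, and strict concavity makes that critical point the unique global maximum. Undoing the substitutions, $s = L/2$ gives $x = \sqrt{K}$, so $\beta N^* = \sqrt{\beta\gamma C}$ and hence $N^* = \sqrt{\gamma C/\beta}$. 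Then $t^* = C/N^* = \sqrt{\beta C/\gamma}$, and $h$ is maximized exactly where $\log h$ is.

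\textbf{Step 3 (main obstacle): strict concavity of $\psi$.} Compute
\begin{equation}
\psi'(s) = \frac{e^{s}/(1+e^{s})}{\log(1+e^{s})}.
\end{equation}
Put $w = \log(1+e^{s})$. This is a strictly increasing bijection from $\mathbb{R}$ onto $(0,\infty)$. Since $e^{s}/(1+e^{s}) = 1 - e^{-w}$, we get $\psi'(s) = (1-e^{-w})/w$. It therefore suffices to show that $q(w) = (1-e^{-w})/w$ is strictly decreasing on $(0,\infty)$. We have
\begin{equation}
q'(w) = \frac{(1+w)e^{-w} - 1}{w^{2}},
\end{equation}
and the numerator is negative for $w > 0$ because $e^{w} > 1 + w$. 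Hence $\psi'$ is strictly decreasing, so $\psi$ is strictly concave, which completes the argument.

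As a sanity check on the answer, $h \to 0$ at both ends ($N \to 0$ or $N \to \infty$ along the constraint), consistent with an interior maximum. The concavity argument already covers this, so no separate boundary analysis is needed.
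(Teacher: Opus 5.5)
Your proof is correct. It shares its key inequality with the paper's proof but organizes the argument differently.

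The paper substitutes $t = C/N$ and differentiates in $N$. It rewrites the first-order condition as $f(\beta N) = f(\gamma C/N)$ with $f(z) = \frac{1+z}{z}\log(1+z)$, then uses $z > \log(1+z)$ to get $f' > 0$, so the only stationary point is at $\beta N = \gamma C/N$.

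Your $\psi'$ is exactly $1/f$ in disguise. With $z = e^{s}$ one has $\psi'(s) = \frac{z}{(1+z)\log(1+z)} = 1/f(z)$, and the paper's stationarity equation is your $\psi'(s) = \psi'(L-s)$. Your inequality $e^{w} > 1+w$ is the paper's $z > \log(1+z)$ under $w = \log(1+z)$.

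What your framing buys is a complete proof of the \emph{maximum} claim. Reading the monotonicity as strict concavity of $\log h$ in $s = \log(\beta N)$ makes the symmetric critical point the unique global maximizer immediately. The paper's proof only shows that the stationary point is unique. It never argues that this point is a maximum rather than a minimum, or that the supremum is not approached at the ends of $(0,\infty)$. Closing that gap needs exactly the boundary behaviour you list as a sanity check: $h > 0$, with $h \to 0$ as $N \to 0^{+}$ and as $N \to \infty$.

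The paper's route, in exchange, stays in the original variables $N, t$ and is a slightly more direct calculation. It reads naturally as the first-order condition behind the compute-frontier formula.
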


\begin{proof}
Substituting $t = \frac{C}{N}$ into the objective, we obtain a single-variable function $h(N) = \log\left(\frac{\gamma \, C}{N} + 1\right) \cdot \log(\beta \, N + 1)$. To find the stationary points, we introduce auxiliary variables $x \triangleq \beta N$ and $y \triangleq \frac{\gamma C}{N}$. Note that $x \cdot y = \beta \gamma C$, which is a constant independent of $N$.

Setting the derivative of $h(N)$ with respect to $N$ to zero yields:
\begin{equation}
    \frac{d h}{d N} = \frac{\beta}{1 + \beta N} \log\left(1 + \frac{\gamma C}{N}\right) - \frac{\gamma C / N^2}{1 + \frac{\gamma C}{N}} \log(1 + \beta N) = 0.
\end{equation}

Substituting $x$, $y$, and the identities $\beta = \frac{x}{N}$ and $\frac{\gamma C}{N^2} = \frac{y}{N}$ into the stationarity condition, we have:
\begin{equation}
    \frac{x/N}{1 + x} \log(1 + y) - \frac{y/N}{1 + y} \log(1 + x) = 0.
\end{equation}

Multiplying by $N$ and separating the variables to opposite sides of the equation gives:
\begin{equation}
    \frac{1+x}{x} \log(1+x) = \frac{1+y}{y} \log(1+y).
\end{equation}

Observe that this equation is of the form $f(x) = f(y)$, where $f(z) \triangleq \frac{1+z}{z} \log(1+z)$. To determine the injectivity of $f(z)$ for $z > 0$, we evaluate its derivative:
\begin{equation}
    f'(z) = \frac{z - \log(1+z)}{z^2}.
\end{equation}
By the standard logarithmic inequality $z > \log(1+z)$ for all $z > 0$, it follows that $f'(z) > 0$. Therefore, $f(z)$ is strictly monotonically increasing, implying that $f(x) = f(y)$ if and only if $x = y$.

Equating $x = y$ and substituting their original definitions back yields:
\begin{align}
    \beta N &= \frac{\gamma C}{N} \nonumber \\
    N^* &= \sqrt{\frac{\gamma C}{\beta}}.
\end{align}
Applying the constraint $t = \frac{C}{N}$ gives the corresponding optimal time allocation $t^* = \sqrt{\frac{\beta C}{\gamma}}$, concluding the proof.
\end{proof}

\textbf{Remark (Optimal Performance Trajectory).} At the optimum, both logarithmic arguments equalize to $\sqrt{\beta \gamma C} + 1$, so $h^* = \left[\log\left(\sqrt{\beta \gamma C} + 1\right)\right]^2$ and $g^* = \alpha \cdot h^*$. Substituting into $P = 100 \cdot g/(g+1)$ gives the maximum achievable performance as a function of the total compute budget $C$:
\begin{equation}
\label{eq:optimal_p_c}
    P(C) = 100 \cdot \frac{\alpha \left[ \log\left(\sqrt{\beta \gamma C} + 1\right) \right]^2}{\alpha \left[ \log\left(\sqrt{\beta \gamma C} + 1\right) \right]^2 + 1}.
\end{equation}

\textbf{Remark (Integer Constraint).} Since $N$ must be a positive integer in practice, we round $N^*$ to the nearest integer, $N^* \leftarrow [ N^* ]$, and set $t^* = C / N^*$.

\end{document}